\documentclass{article}

\usepackage[preprint]{neurips_2026}

\usepackage[utf8]{inputenc} 
\usepackage[T1]{fontenc}    
\usepackage{hyperref}       
\usepackage{url}            
\usepackage{booktabs}       
\usepackage{amsfonts}       
\usepackage{nicefrac}       
\usepackage{microtype}      
\usepackage{xcolor}         
\usepackage{wrapfig}

\usepackage{microtype}
\usepackage{graphicx}
\usepackage{subcaption}
\usepackage{booktabs}
\usepackage{amsmath, amssymb, mathtools, amsthm}
\usepackage{tabularx, multirow}
\usepackage[labelfont=bf,justification=centering]{caption}
\usepackage{pdfpages}
\usepackage{soul}
\usepackage[titletoc]{appendix}
\usepackage{chngcntr}
\counterwithin{subsection}{section}

\usepackage{color,soul}
\usepackage[colorinlistoftodos]{todonotes}

\usepackage[capitalize,noabbrev]{cleveref}

\theoremstyle{plain}

\newtheorem{proposition}{Proposition}
\newtheorem{assumption}{Assumption}
\crefname{assumption}{Assumption}{Assumptions}

\title{Artificial Structure Function Search: Preserving Artificial Functional Connectivity for Structured Pruning}

\author{
  Mindula Illeperuma$^{1}$\thanks{Corresponding author} \qquad
  Rafael Pina$^{1}$ \qquad
  Charuka Herath$^{1}$ \qquad \\
  \textbf{Sharmarke A. Gabayre$^{1}$} \qquad
  \textbf{Varuna De Silva$^{2}$} \\[4pt]
  $^{1}$Intitute for Digital Technologies, Loughborough University London \\[2pt]
  $^{2}$Center for Neuromorphic Intelligence, Aston University \\[4pt]
  \texttt{\{k.m.illeperuma,r.m.pina,c.k.herath,s.gabayre\}@lboro.ac.uk} \\
  \texttt{v.desilva@aston.ac.uk}
}

\begin{document}

\maketitle

\begin{abstract}
Structured pruning is a  model compression technique that is used to reduce the computational cost of deploying deep neural networks on resource-constrained devices. Popular methods of pruning rely on opaque heuristics or weight-based criteria that give no indication as to the structural dependencies in the network. To address these limitations we present Artificial Structure Function Search (ASF-S): a novel structured pruning framework. ASF-S utilizes Principle Gradient Importance (PGI): a novel prune-candidate selection criteria that is inspired by structure-function relationships in the brain. By ensuring the pruned structure of the model respects topographical organization of the output layer, we define Artificial Functional Connectivity (AFC) for artificial neural networks. AFC provides evidence to demonstrate that accurate smaller networks can be found using careful prune candidate selection criteria. We present results for PGI as a selection criterion and for ASF-S as a pruning framework against recent benchmarks, demonstrating that our method yields model variants with 70\% parameter reduction, that can recover baseline accuracy without re-training the pruned layers. 
\end{abstract}

\section{Introduction}
The success of Deep Neural Networks (DNNs) in various domains, such as image recognition and natural language processing \cite{tian2024visual,simonyan2014very,devlin2019bert}, has emerged as a cornerstone of Artificial Intelligence (AI). Achieving top performance of these networks relies on the availability of significant resources, including advanced computational hardware and sufficient memory capacity \cite{han2015deep}. For example, popular commercial models such as GPT-3 and GPT-4 include over 175 billion parameters, and increasing parameter size leads to high inference costs \cite{cheng2023survey}. 

Unfortunately, this limits the deployment of DNNs on edge devices with constrained computational resources, such as smartphones, which are restricted by memory, energy, and CPU/GPU capacity \cite{demir2025overview,han2015deep,you2019gate,frantar2023sparsegpt}. Additionally, to see success of DNNs in real-world applications, such as autonomous driving, patient triage in healthcare and industrial robotics, their complexity and memory footprint need to be reduced \cite{demir2025overview,han2015deep}. Similarly, another problem with large models is any redundant parameters leave them more susceptible to adversarial attacks \cite{kraidia2024defense,sehwag2020hydra}. This makes it difficult when attempting to deploy in sectors using private and sensitive data such as in healthcare even when they could significantly benefit from DNN based system integrations. 
\begin{figure*}[ht] 
    \centering
    \includegraphics[width=0.8\textwidth,keepaspectratio]{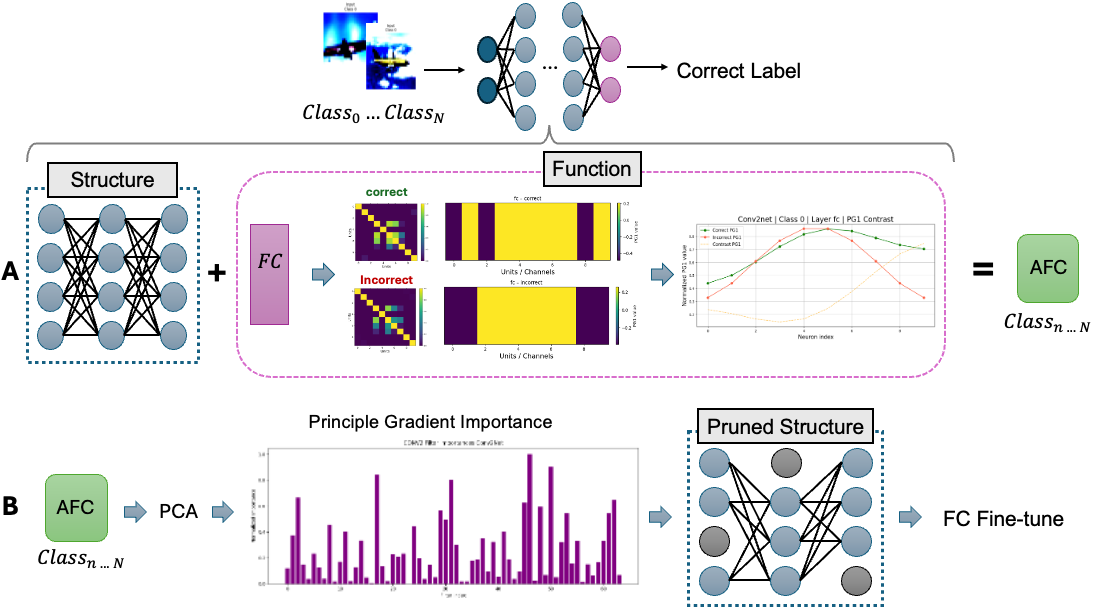}
    \caption{The overall framework of our proposed Artificial Structure Function Search (ASF-S). Taking a model that has been trained in an image classification task, we propose \textbf{A}: AFC as artificial functional connectivity; the relationship between all components in the original trained network (Structure) and the topographical organization (Function) of the output layer. \textbf{B}: As PG1 is collected per class in the dataset, we compute PCA to get the top components representing the class-wise differences in AFC and use this to compute the principle gradient importance (PGI) scores per layer. We aim to yield a sparse network by preserving the AFC required for the task, despite changes to structure caused by pruning. We demonstrate relevance of preserving AFC by freezing the pruned layers and only fine-tuning the output layer to recover model performance.}
    \label{fig:ASF-S}
\end{figure*}

To address the challenges of deploying large neural networks in real-world settings, a variety of model compression techniques such as quantization \cite{shao2023omniquant} and knowledge distillation \cite{gu2023minillm} have been developed. Among these, pruning has emerged as a key strategy, selectively removing weights, neurons, or filters to yield a sparse and efficient network \cite{ashkboos2024slicegpt,ma2023llm}. Pruning methods can be broadly categorized into unstructured approaches, which remove individual parameters based on magnitude or importance \cite{cheng2023survey}, and structured approaches, which remove entire components such as filters or neurones \cite{li2021npas,he2023structured}. Depending on when pruning is applied, methods can also be classified as pre-training, during-training (progressive), or post-training with subsequent fine-tuning \cite{morcos2019one}.  

These methods are all centered around finding \textit{winning tickets}, that is the concept proposed by \cite{frankle2019lottery}, that randomly initialized dense neural networks contain a sparse subnetwork that can match accuracy of the dense network when trained. However finding this sparse subnetwork efficiently using post-training structured methods requires carefully designed scoring criteria \cite{cheng2023survey}. 
Methods tend to yield better results when structured pruning is applied iteratively as one-shot methods damage internal representations of the dense network, making accuracy recovery difficult \cite{janusz2025one,cheng2023survey}. Additionally, after structured pruning it is generally considered essential to retrain the full model due to representational changes in the pruned layers \cite{blalock2020state}. This indicates that, despite having pruning criterion, structured pruning techniques lack a way to identify the accurate smaller networks that includes the network components that have learned the essential feature representations for whatever task the model has been trained on.

In neuroscience, brain networks are usually investigated from a task-dependent perspective \cite{zhao2023task}. To then learn the relationship between anatomical structure of the brain  and the emergent functional activations during tasks, network neuroscientists look into structure-function coupling (SFC). SFC gives us insight into the mechanistic relationship between the structure of networks and how those networks co-activate in response to tasks \cite{fotiadis2024structure}. From this research we can learn about the topological organization of the brain that facilitates complex intermediate processing that enables cognition and behavior in humans \cite{fotiadis2024structure,mesulam1998sensation,margulies2016situating}. In complex organisms, large-scale topological organization across the brain is captured by the unimodal-transmodal axis; a principal gradient of cortical organization that goes from primary sensory regions (unimodal) to association regions (transmodal) such as the default mode network (DMN) \cite{margulies2016situating,ronde2024default}. Specific to visual, object recognition tasks, the brain demonstrates connectivity between primary visual processing areas in the visual cortex, and memory association regions that make up the DMN \cite{paquola2025architecture,zhang2018intrinsic}. This highlights the importance of preserving structure-function relationships in humans to enable object recognition based on learned representations.  

On the other hand, looking into functional networks in artificial neural networks (ANNs) is an emerging area stating that analysis of the topology of ANNs can give explanations for understanding their behavior \cite{zhang2023functional,songdechakraiwut2025functional}. Additionally, Convolutional Neural Networks (CNNs) are inspired by the early visual cortex (comprising of unimodal regions) that make up the primary visual cortex \cite{fukushima1983neocognitron,lecun2002gradient,zeiler2014visualizing,yamins2014performance}. However, their learned representations are not explicitly interpretable, making it difficult to identify filters that encode task-critical information \cite{zhang2023functional,bau2017network}. Consequently, structured pruning that isn't structure-function aware can disrupt the learned feature representations, requiring full-model retraining to recover.  

In machine learning, the architectural design of a neural network already gives a pre-determined structural topological organization (in neuroscience-analogous terms of structure). Given the importance of structure-function relationships in preserving brain functionality following ablation of white matter tracts \cite{uddin2013complex,uddin2008residual}, we hypothesize that preservation of the structure-function relationship in ANNs during pruning can yield a sparse network that will not require fine-tuning of the pruned layers. 
In this work, we introduce Artificial Structure Function Search (ASF-S), as shown in Figure \ref{fig:ASF-S}: a structured pruning framework inspired by neuroscience, that considers the structure-function relationship in ANNs to guide pruning decisions.  

Our contributions can be summarized as follows:  
\begin{itemize} 
    \item \textbf{Artificial Functional Connectivity (AFC):} The novel term we introduce to describe the minimal structure-function relationship that must be preserved during structured pruning. Inspired by a neuroscience brain analysis technique, we believe that, so long as functional connectivity in the output layer is preserved, we can isolate the parts of the network that are safe to remove, thus enabling accuracy recovery without fine-tuning the pruned layers. Our results show that pruning with respect to AFC, yields a sparse network that retains the components in each layer that are necessary for the learned task. 
    
    \item \textbf{Principal Gradient Importance Score (PGI):} We introduce a biologically inspired saliency metric derived from using AFC of the output layer. The PGI score quantifies the contribution of units in preceding layers, to the topological organization of the output layer, when the model correctly classifies an image, enabling us to prune neurons or filters in earlier layers, that do not contribute to the functional connectivity of the output layer. 

    \item \textbf{Artificial Structure Function-Search (ASF-S):} We propose a post-training, functionally guided pruning framework that leverages PGI scores to preserve the sparse subnetwork that maintains AFC, while reducing FLOPs and parameters. Fine-tuning can be done on the whole model consistent with current literature, but our most prominent finding is that only re-training the output layer is sufficient to maintain  100\% of baseline model performance at sparsity of up to 70\% in a CNN architecture. We also provide pruning results from a transformer based architecture to demonstrate usability in that domain. 
\end{itemize}

\section{Related Works}
\textbf{Finding Sparse Networks Using Pruning.}
The Lottery Ticket Hypothesis (LTH) \cite{frankle2019lottery} posits that sparse subnetworks within dense networks, when trained in isolation from initialization, can match full-model performance. Neural network pruning has thus become a popular approach for identifying these \textit{winning tickets}, reducing ANN over-parameterization and computational cost. Pruning methods are broadly classified as \textit{unstructured} or \textit{structured} \cite{cheng2023survey}. Unstructured pruning removes individual weights deemed less important \cite{lee2018snip,lecun1989optimal,wang2020grasp,frankle2019lottery}; for example, HRank \cite{lin2020hrank} ranks filters or units using local activation characteristics alone. However, such methods ignore redundancy, where highly ranked components may capture similar features, reducing representation diversity after pruning \cite{zniyed2025singular}. Moreover, irregular sparse weight patterns limit computational speedup and FLOPs reduction on standard hardware \cite{guo2024interpretable,han2016eie,zhang2021hardware}. In contrast, structured pruning removes entire units (e.g. filters or neurons), reducing both parameter count and computation while producing regular sparse matrices better suited to resource-constrained deployment \cite{he2017channel,luo2017thinet,lecun1989optimal,lin2020hrank,zniyed2025singular,guo2024interpretable}.

\textbf{Pruning Using Spectral Graphs.}
Neural network pruning usually considers ANNs from a probabalistic or gradient flow perspective \cite{malach2020proving,evci2020rigging}. However considering ANNs from a spectral graph theory perspective has proven very useful as it enables the understanding of deeper networks by considering graph properties \cite{laenen2023one}. Some works also use graphical convolutional network (GCNs) to capture topological structure of ANNs and identify pruning candidates \cite{liu2025structure}.
In particular understanding network connectivity using eigenvalues \cite{alon1986eigenvalues} and using this to prune, enables layerwise pruning in one shot following training \cite{laenen2023one}. Thus demonstrating the benefit of topology-aware pruning strategies.

\textbf{Principle Gradient Extraction Using Diffusion Map Embedding.}
Brain regions exhibit diverse structures and connectivity profiles, making raw functional interaction measurements noisy and difficult to interpret \cite{oliver2019quantifying}. Diffusion map embedding addresses this by applying nonlinear dimensionality reduction to functional connectivity matrices, capturing relationships that linear methods may miss \cite{Coifman2006_DiffusionMaps}. From these embeddings, the first principal gradient (PG1) emerges as a continuous low-dimensional axis representing the dominant mode of variation in functional connectivity across regions. Regions with similar PG1 values share connectivity patterns and structural characteristics, indicating membership in the same large-scale organizational network (e.g. primary visual cortex (unimodal) or default mode network (transmodal)) \cite{margulies2016situating}. PG1 therefore provides a topographical view of network organization \cite{Yeo2011_7Networks}, aiding interpretation of large-scale cortical hierarchies and functional network arrangement during tasks or following structural injury \cite{smallwood2021default}. A simplified example of how principal gradient values reflect cortical connectivity patterns is provided in Section 3 of Figure~\ref{fig:pg_ext_brain} in section C of the Appendix.

\textbf{Using Principle Gradient Vectors For ASF-S.}
Diffusion map (DM) \cite{Coifman2006_DiffusionMaps} techniques have been widely applied in computer vision \cite{pachauri2014permutation} and textual network embeddings \cite{zhang2018diffusion}. Beyond these uses, DM provides a principled way to uncover the intrinsic geometry of representations by capturing relationships across units, not just across stimulus conditions. Recent work has shown that neural networks, including large language models, organize internal activations along low-dimensional manifolds that reflect task-specific distinctions \cite{valeriani2023geometry}. By constructing diffusion kernels over unit-wise activations taken across samples, one can reveal coherent geometric structures (e.g., clusters, axes, and manifolds) that encode functionally meaningful variation (or functional connectivity) within the model. In the context of pruning, conventional strategies often focus on magnitude or gradient-based criteria but rarely consider how units jointly structure information. Here we determine AFC by computing PG1 for the output layer, using a contrast vector (PG1 for correct minus PG1 for incorrect) to guide pruning. Thus identifying the components in each layer that must be maintained in the structure to preserve AFC. We outline this in \textbf{A} of Figure~\ref{fig:ASF-S}.

\section{Preliminaries}\label{sec:pre}
\textbf{Artificial Functional Connectivity Maps.}  
In neuroscience, \textit{functional connectivity} (FC) characterizes statistical dependencies between the activity of distinct brain regions, typically computed by correlating time series of different brain regions \cite{fox2007spontaneous,cohen2008defining,van2010exploring}. Drawing an analogy to neural networks, each unit in a layer $\ell$ produces an activation vector across samples, which can be treated as the analogue of a neural time series. Let $N_\ell$ denote the number of units in layer $\ell$, and $S$ the number of samples. We define the activation matrix
\begin{equation}
\mathbf{X}^{(\ell)} = 
\big[\,\mathbf{x}^{(\ell)}_{1}, \dots, \mathbf{x}^{(\ell)}_{N_\ell}\,\big]^\top 
\in \mathbb{R}^{N_\ell \times S},
\end{equation}
where $\mathbf{x}^{(\ell)}_i \in \mathbb{R}^{S}$ contains the activations of unit $i$ across samples.

Pairwise functional similarity between units is quantified using cosine similarity:
\begin{equation}
A_{ij} = 
\frac{\mathbf{x}^{(\ell)}_i{}^\top \mathbf{x}^{(\ell)}_j}{
\|\mathbf{x}^{(\ell)}_i\|_2 \,\|\mathbf{x}^{(\ell)}_j\|_2 }.
\end{equation}
Although Pearson correlation is the most direct analogue to FC in neuroscience, cosine similarity is scale-invariant, robust for non-negative activations (e.g., ReLU layers), and aligns with common practice in representational similarity analysis (RSA) when deriving representational dissimilarity matrices (RSD) \cite{kriegeskorte2008representational,kornblith2019similarity}.

To prepare these similarities for spectral analysis, we transform them into a weighted adjacency matrix:
\begin{equation}
W_{ij} = \exp\Big[-\frac{(1-A_{ij})^2}{2\sigma^2}\Big], \quad 
\mathbf{W} \leftarrow \frac{1}{2}(\mathbf{W} + \mathbf{W}^\top) + \eta I,
\end{equation}
where $\eta$ ensures numerical stability. This matrix $\mathbf{W}$ represents the network-level functional connectivity. A few examples of our AFC matrices for different classes can be seen in Figure~\ref{fig:fc_maps}.

\textbf{The Lottery Ticket Hypothesis (LTH).}  
The LTH posits that a randomly-initialized dense neural network $f(x;\theta_0)$, with parameters $\theta_0 \sim \mathcal{D}$, contains a sparse subnetwork that can match its test performance after retraining \cite{frankle2019lottery}. Formally, there exists a binary mask $m^\star \in \{0,1\}^d$ such that
\begin{equation}
    f(x; m^\star \odot \theta_0) \approx f(x; \theta_0) \quad \text{after training from } \theta_0,
\end{equation}
where $\odot$ denotes element-wise multiplication. While LTH guarantees the existence of $m^\star$, standard pruning methods may not reliably recover it. Theoretical support exists in restricted settings such as linear networks, overparameterized models, and kernel approximations \citep{malach2020proving,pensia2020optimal}, and empirically, winning tickets generalize across datasets and architectures \citep{chen2020lottery,yao2023probabilistic,cheng2023survey}.


\section{Artificial Structure Function Search}
Here we propose Artificial Structure Function Search (ASF-S): a method of extracting \textit{"Function"} for the different model architectures and datasets. 

\textbf{Extracting Network Function By Principal Gradient Extraction.}  
To capture nonlinear structure in these connectivity patterns, we employ diffusion map embedding \cite{Coifman2006_DiffusionMaps}, a spectral technique that extracts principal gradients reflecting dominant modes of coordinated activity across units. Let the degree matrix be
\begin{equation}
\mathbf{D} = \mathrm{diag}(\mathbf{W}\mathbf{1}),
\end{equation}
and the normalized diffusion operator
\begin{equation}
\mathbf{K} = \mathbf{D}^{-\alpha} \mathbf{W} \mathbf{D}^{-\alpha}, \quad \alpha \in [0,1].
\end{equation}

The eigen-decomposition of $\mathbf{K}$,
\begin{equation}
\mathbf{K}\mathbf{v}_k = \lambda_k \mathbf{v}_k, \quad \lambda_1 \ge \lambda_2 \ge \dots \ge \lambda_{N_\ell},
\end{equation}
yields principal gradients \cite{margulies2016situating}. The first non-trivial eigenvector $\mathbf{v}_2$ captures the dominant mode of co-variation among units:
\begin{equation}
\mathbf{g}^{(\ell)} = \mathbf{v}_2,
\end{equation}
normalized to unit length. Differences between correct and incorrect trials highlight shifts in layer-level coordination.

The principle gradient (PG1) vector of the output layer forms the foundation for extracting PGI scores which drive our ASF-S method (a few examples of our PG1 vectors can be seen in Figure~\ref{fig:pg_vectors}). Our structured pruning method preserves units that maintain the dominant functional interactions driving correct classification.

We leverage the PG1 collected from the output layer to compute 
\textit{Principal Gradient Importance} (PGI) scores that rank unit contribution of the preceeding layers, to the PG1 contrast vector obtained by calculating the difference between PG1 of correct and incorrect classification. Our procedure follows three stages:

\textbf{Class-wise Gradient Contrast.}
For each class $c$, we compute the difference between the PG1 vectors 
obtained from correctly and incorrectly classified samples:
\begin{equation}
    \Delta g_c = g^{\mathrm{corr}}_c - g^{\mathrm{incorr}}_c .
\end{equation}
Stacking these across all classes yields the contrast matrix
\begin{equation}\label{eq:cont_matrix}
    \Delta G =
    \begin{bmatrix}
        \Delta g_1^\top \\
        \vdots \\
        \Delta g_C^\top
    \end{bmatrix}.
\end{equation}

\textbf{PCA to Obtain a Class-General Discriminative Direction.}
As we want to ensure AFC resembles the structure-function relationships in the network required to classify all classes it has been trained on, we apply Principal Component Analysis 
(PCA) \cite{guo2025slimllm,Greenacre2022_PCA} to $\Delta G$ and extract the 
top $K$ components
\begin{equation}\label{eq:pca}
    \{\mathbf{u}_1, \ldots, \mathbf{u}_K\},
\end{equation}
which capture the dominant, class-shared axes.  
PCA therefore yields a discriminative direction that is \emph{not tied to a 
particular class}, but instead reflects global sensitivity structure across 
all classes. This is shown in B of Figure~\ref{fig:ASF-S}. 


\textbf{Unit Projection and PGI Computation.}
Let $\mathbf{w}_i^{(\ell)}$ denote the flattened weight vector of unit $i$ in
layer $\ell$, where a “unit’’ refers to either a convolutional filter or a 
fully connected neuron.  
Let $\bar{\mathbf{w}}^{(f)}$ denote the mean weight vector of the output layer.
Each unit is scored via projection onto the top PCA components, weighted by the 
output-layer structure:
\begin{equation}
    s_{i,k}^{(\ell)} 
    = 
    \left\langle 
        \mathbf{w}_i^{(\ell)},\,
        \mathbf{u}_k \odot \bar{\mathbf{w}}^{(f)}
    \right\rangle.
\end{equation}
The PGI score for unit $i$ is
\begin{equation}
    \mathrm{PGI}_i^{(\ell)}
    =
    \mathcal{N}\!\left(
    \frac{1}{K} \sum_{k=1}^K 
        \left| s_{i,k}^{(\ell)} \right|
    \right),
\end{equation}
where $\mathcal{N}$ denotes min--max normalization across units in
layer~$\ell$. Low-PGI units contribute least to discriminative gradient flow and are pruned.
We then rebuild the pruned layers to address the dimensionality issues that arise during structured pruning of layers, preserving only the key components needed to maintain AFC so that the network can complete the classification task. 

Layer-specific thresholds $(\tau_1, \dots, \tau_L)$ are selected via grid search 
over PGI values, yielding multiple structured-pruned variants.

The proposed pruning method outlined above allows us to extract \textit{"Function"} for different architectures, based on a threshold $\tau_L$. The Lottery Ticket Hypothesis (LTH) \cite{frankle2019lottery} that we refer to in section \ref{sec:pre}, serves as a motivation to our method in the sense that we also approximate accurate smaller networks from larger ones. We will now show that by pruning the networks and retraining only the final layer, our method is able to recover accuracy up to a certain degree.

Let $w_i^{(f)}$ denote the weight vector of output unit $i$, $b_i^{(f)}$ the corresponding bias, and $f$ the output layer. In the architectures considered in this paper, the output layer is affine and hence it computes $f_i(z) = \big\langle w_i^{(f)}, z \big\rangle + b_i^{(f)},$ with prediction $ = \arg\max_i f_i(z)$.
Let $\tilde z(x)$ denote the feature produced by the pruned network at threshold configuration $\tau = (\tau_1,\dots,\tau_{L-1})$.

\begin{proposition}\label{prop:1}
For $x$ with label $y$, let $\gamma(x) = f_y(z(x)) - \max_{i\ne y}f_i(z(x))$ be the classification margin of the unpruned network, $B = \max_i\|w_i^{(f)}\|$, and $\varepsilon(\tau) = \|z(x)-\tilde z(x)\|$ the representation shift induced by pruning at threshold $\tau$. If $\varepsilon(\tau) < \gamma(x)/(2B)$, the unmodified output layer classifies $x$ correctly on $\tilde z(x)$.
\end{proposition}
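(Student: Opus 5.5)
The plan is a direct perturbation argument on the affine output layer. Because each logit is Lipschitz in the feature with constant $\|w_i^{(f)}\|\le B$, a representation shift of size $\varepsilon(\tau)$ can move each logit by at most $B\varepsilon(\tau)$. The margin can therefore shrink by at most $2B\varepsilon(\tau)$, and the hypothesis $\varepsilon(\tau)<\gamma(x)/(2B)$ is exactly what keeps it strictly positive. Implicitly $\gamma(x)>0$, since otherwise the hypothesis cannot hold when $B\ge 0$. I also assume $B>0$; if $B=0$ all logits equal their biases and nothing depends on $z$.

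The first step fixes an output unit $i$. The bias cancels in the difference, so I write
\[
f_i(\tilde z(x)) - f_i(z(x)) = \big\langle w_i^{(f)}, \tilde z(x) - z(x)\big\rangle .
\]
Cauchy--Schwarz then gives $|f_i(\tilde z(x)) - f_i(z(x))| \le \|w_i^{(f)}\|\,\varepsilon(\tau) \le B\varepsilon(\tau)$. This holds for every $i$, including $i=y$.

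The second step bounds the pruned margin. For any $j\ne y$,
\[
f_y(\tilde z) - f_j(\tilde z) \ge \big(f_y(z) - B\varepsilon\big) - \big(f_j(z) + B\varepsilon\big) \ge \gamma(x) - 2B\varepsilon(\tau) > 0 .
\]
Here the middle inequality uses $f_j(z)\le \max_{i\ne y} f_i(z) = f_y(z)-\gamma(x)$. Taking the minimum over the finitely many $j\ne y$ gives $f_y(\tilde z) > \max_{j\ne y} f_j(\tilde z)$. Hence $\arg\max_i f_i(\tilde z(x)) = y$, and the argmax is unique, so ties are not an issue.

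There is no real obstacle; the only care needed is bookkeeping. The norm in $\varepsilon(\tau)$ and in $B$ must be the Euclidean norm, or more generally dual norms, for Cauchy--Schwarz or Hölder to apply. The factor $2$ arises because the correct logit can decrease while a competitor increases. One must also note that the pruned network's feature $\tilde z(x)$ has the same dimension as $z(x)$, so that the unmodified layer applies to it. I would state this as the standing convention that pruned units contribute zeros, or equivalently that the layer preceding the output is not reduced in width. This assumption needs to be made explicit, because otherwise $\varepsilon(\tau)$ is not well defined.
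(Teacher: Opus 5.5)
Your proposal is correct and follows the paper's proof: the paper also uses that each affine logit $f_i$ is $B$-Lipschitz, so every logit moves by at most $B\varepsilon(\tau)$, and concludes that the margin stays at least $\gamma(x)-2B\varepsilon(\tau)>0$. The extras in your write-up are consistent with the paper's argument rather than a different route: you make the Lipschitz bound explicit via Cauchy--Schwarz, and you note that $\gamma(x)>0$, $B>0$, and that $\tilde z(x)$ must lie in the same space as $z(x)$.
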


As previously described, PGI-based pruning relies on computing a diagonal binary mask $M^{(l)}$ at layer $l$, zeroing units with $\mathrm{PGI}_i^{(l)} < \tau_l$. Proposition \ref{prop:1} gives a $\tau$-dependent sufficient condition for prediction preservation without retraining the pruned layers. In the results ahead we show how the representation shift varies with $\tau$ for sparse networks. 

Additionally, under the following Assumption \ref{assump:1}, PGI-based scoring in AFC-based pruning assigns greater importance to task-relevant units. I.e., following the principles of AFC, a unit's PGI score reflects its alignment with the network's task-discriminative behavior.
\begin{assumption}\label{assump:1}
Let $u_k$ be the $k$-th PCA component of the contrast matrix $\Delta G$ (Eq.\ref{eq:cont_matrix},\ref{eq:pca}) and $\bar
w^{(f)}$ the mean output-layer weight vector. A unit $i$ in layer $l$ is \emph{task-relevant} if
there exists $\kappa_i > 0$ such that
\begin{equation}
    \big\langle w_i^{(l)},\, u_k \odot \bar w^{(f)} - \bar w^{(f)} \big\rangle \;\ge\; \kappa_i.
\end{equation}
\end{assumption}
Assumption \ref{assump:1} states a sufficient condition under which the proposed method leads to a more task-aligned network. This can be written as following proposition:

\begin{proposition}
    Under Assumption \ref{assump:1}, PGI-based scoring assigns greater importance to units that are structurally aligned with the network's task-discriminative behavior. Formally, for every task-relevant unit $i$ in layer $l$, 
\begin{equation}
   s_{i,k}^{(l)} = \big\langle w_i^{(l)}, u_k\odot\bar w^{(f)}\big\rangle \ge \big\langle w_i^{(l)}, \bar w^{(f)}\big\rangle = p_{i,k}^{(l)}.
\end{equation}
\end{proposition}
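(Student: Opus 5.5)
The claim follows directly from bilinearity of the inner product together with \cref{assump:1}. The work lies in rewriting the difference $s_{i,k}^{(l)} - p_{i,k}^{(l)}$ as a single inner product that matches the quantity the assumption controls.

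\textbf{Step 1 (linearity).} Fix a layer $l$, a task-relevant unit $i$ and a component index $k$. Since both $s_{i,k}^{(l)}$ and $p_{i,k}^{(l)}$ are inner products against the same vector $w_i^{(l)}$, linearity in the second argument gives
\begin{equation*}
s_{i,k}^{(l)} - p_{i,k}^{(l)} = \big\langle w_i^{(l)},\, u_k\odot\bar w^{(f)}\big\rangle - \big\langle w_i^{(l)},\, \bar w^{(f)}\big\rangle = \big\langle w_i^{(l)},\, u_k\odot\bar w^{(f)} - \bar w^{(f)}\big\rangle .
\end{equation*}
Here I will note the implicit dimensional convention already used in the definition of $s_{i,k}^{(l)}$: the vectors $w_i^{(l)}$, $u_k$ and $\bar w^{(f)}$ are taken to live in a common space, so that the Hadamard product and both inner products are well defined. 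I treat this as given by the paper's construction.

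\textbf{Step 2 (apply the assumption).} By \cref{assump:1}, for a task-relevant unit the right-hand side is at least $\kappa_i>0$. Hence
\begin{equation*}
s_{i,k}^{(l)} \ge p_{i,k}^{(l)} + \kappa_i > p_{i,k}^{(l)},
\end{equation*}
which is the stated inequality, in fact with a strict margin $\kappa_i$. One point needs care: the assumption is stated "for $u_k$ the $k$-th component", and I read it as holding for each $k\in\{1,\dots,K\}$ used in the PGI score. If it only held for some $k$, the inequality would be claimed only for those $k$, and I would state this explicitly.

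\textbf{Step 3 (interpretation for PGI).} To connect the result to the informal claim that PGI "assigns greater importance" to task-relevant units, I would average over $k$. This gives $\frac1K\sum_k s_{i,k}^{(l)} \ge \frac1K\sum_k p_{i,k}^{(l)} + \kappa_i$.

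The main obstacle is here, not in the algebra. PGI uses $|s_{i,k}^{(l)}|$ followed by min–max normalization, and neither step is monotone in the signed inequality. To pass to absolute values I would add the mild condition $p_{i,k}^{(l)}\ge 0$ (or $s_{i,k}^{(l)}\ge 0$), under which $|s_{i,k}^{(l)}| \ge s_{i,k}^{(l)} \ge p_{i,k}^{(l)} = |p_{i,k}^{(l)}|$. Since min–max normalization is an increasing affine map within a layer, this ordering is then preserved relative to the baseline score built from $p$. Without such a sign condition I would present the result strictly as the signed projection inequality, which is exactly what the proposition formally asserts.
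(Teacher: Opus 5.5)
Your Steps 1--2 are exactly the paper's proof: linearity of the inner product rewrites $s_{i,k}^{(l)}-p_{i,k}^{(l)}$ as $\langle w_i^{(l)}, u_k\odot\bar w^{(f)}-\bar w^{(f)}\rangle$, and Assumption~\ref{assump:1} bounds this below by $\kappa_i>0$. Your optional Step 3 goes beyond the statement and contains two errors, so drop it or repair it: $s_{i,k}^{(l)}\ge 0$ alone does not yield $|s_{i,k}^{(l)}|\ge|p_{i,k}^{(l)}|$ (take $s=1$, $p=-5$), and the $s$-based and $p$-based scores are min--max normalized by different affine maps, so a per-unit comparison between them need not survive normalization.
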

The proofs of both propositions can be found in Appendix \ref{app:proofs}.

\section{Experiments \& Discussion}
\subsection{Evaluation and Datasets}
We trained three network architectures: \texttt{LeNet300\_100}, \texttt{Conv2Net}, and \texttt{Conv6Net} on MNIST and CIFAR-10 datasets, using standard preprocessing and augmentation pipelines. Models were optimized with Adam and early stopping based on validation loss, and final checkpoints corresponded to the lowest validation loss. Performance metrics, including accuracy, parameter count, and FLOPs can be found in section D of the Appendix. 

\begin{figure}[!h]
    \centering
    \begin{minipage}{0.4\textwidth}
    \centering
    \includegraphics[width=\linewidth]{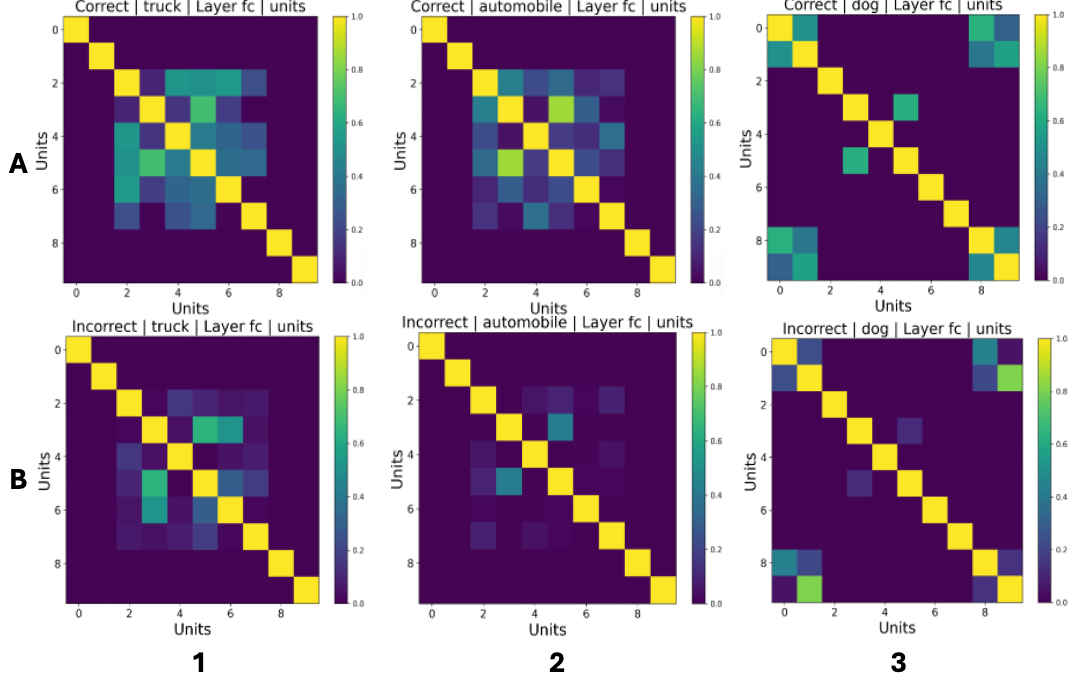}
    \caption{The FC maps extracted from the output layer of Conv2Net. 
    A1, A2 and A3 show PG1 for correctly classified ``truck'', ``automobile'', and ``dog'' 
    samples; B1, B2 and B3 show the corresponding incorrectly classified cases.}
    \label{fig:fc_maps}
    \end{minipage}%
\hfill
\begin{minipage}{0.4\textwidth}
\centering
    \includegraphics[width=\linewidth]{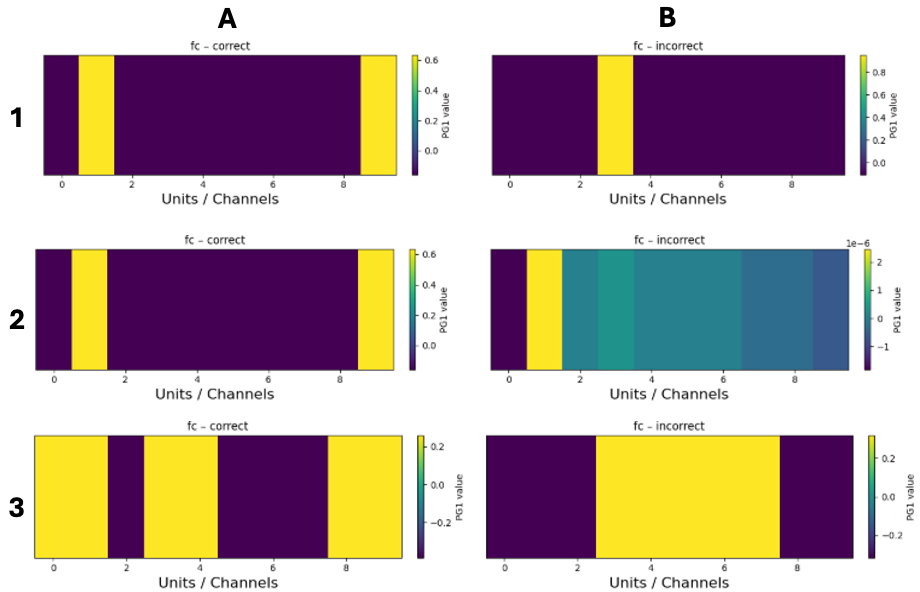}
    \caption{The principal gradients extracted from the output layer of Conv2Net. 
    A1, A2 and A3 show PG1 for correctly classified ``truck'', ``automobile'', and ``dog'' 
    samples; B1, B2 and B3 show the corresponding incorrectly classified cases.}
    \label{fig:pg_vectors}
\end{minipage}
\end{figure}


\subsection{Artificial Functional Connectivity \& Principle Gradient Importance Extraction}
The functional connectivity (FC) maps shown in Figure~\ref{fig:fc_maps} show a visual representation of the unit by unit similarity (or functional connectivity) of the output layer for Conv2Net. Upon visual inspection we found that classes that are semantically similar demonstrated similar looking maps. For example when comparing trucks and automobile (Figure~\ref{fig:fc_maps} (Column 1 and 2)) with an animal (column 3). In addition comparing the conditions of correct versus incorrect classification, the maps also demonstrate visible differences. PCA across classes is done when obtaining the PG1 contrast vector, due to the output layer having a significantly different topological organization for each class. A visualization of the PG1 vectors can be seen in Figure \ref{fig:pg_vectors}. In using the top components from PCA of the contrast vector, we are able to ensure that the artificial functional connectivity can be considered representative of all classes the model has been trained on as different classes yielded different PG1 vectors as shown in Figure \ref{fig:contrast_vector}. Further results can be found in section G of the Appendix where we give further discussion around the extracted FC maps and PG1 vectors. Our method prunes components using a single PGI score per component, similar to other inter-component/structured methods \cite{zniyed2025singular}. However this score is derived  based on a global phenomena of AFC that considers how structure and function in an ANN contribute to task performance. 

\begin{figure}[!h]
    \centering
    \includegraphics[width=0.8\linewidth]{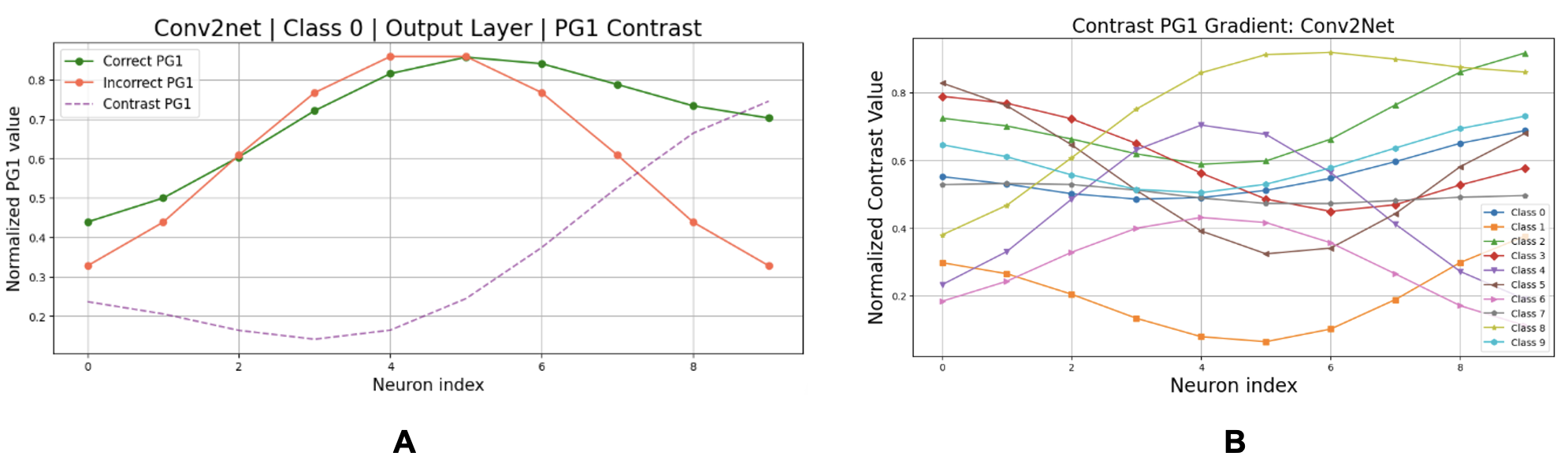}
    \caption{A shows the correct, incorrect gradients and the contrast vector extracted from the output layer of Conv2Net. B shows the contrast vectors extracted per class.}
    \label{fig:contrast_vector}
\end{figure}

\subsection{AFC For Interpretable Pruning Using ASF-S}
To evaluate AFC for pruning, we conducted post-training structured pruning experiments using ASF-S. We found that AFC enables simple, interpretable pruning strategies \cite{ling2024slimgpt,he2023structured} by using PG1 vectors to compute PGI scores and identify non-essential layer components. A threshold grid search on our convolutional network revealed multiple pruned variants with comparable sparsity but different accuracies (Figure~\ref{fig:conv2net_asf-s_variants}), indicating that pruning sensitivity varies by layer and that more aggressive pruning in some layers can be offset by retaining components in others. This suggests that an underlying AFC must be preserved through appropriate layer-wise threshold selection to maintain the structure-function relationship required for correct classification. PGI score results are shown in Figures~\ref{fig:lenet_pgi}, \ref{fig:conv2net_pgi}, and \ref{fig:conv6net_pgi} in the Appendix. Although PG1 primarily captures statistical co-fluctuations among neurons, pruning with respect to AFC provides task-relevant mechanistic insight by identifying the minimal subset of neurons required to preserve output-layer AFC and task performance. Units removable without altering PG contrast are less critical, while retained neurons form a task-relevant functional circuit, offering indirect mechanistic evidence of the components essential for task-specific processing.

\begin{figure}[!h]
    \centering
    \begin{minipage}{0.45\textwidth}
    \centering
    \includegraphics[width=\linewidth]{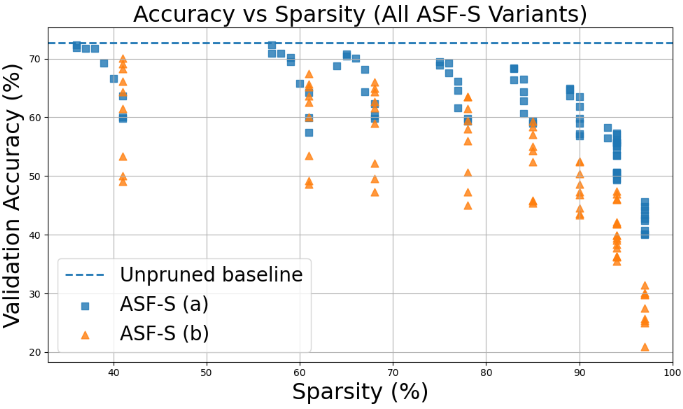}
    \caption{Accuracy-sparsity trade-off for all pruned variants obtained using a grid search for PGI thresholds per layer for ASF-S. ASF-S(a) corresponds to full model fine-tuning after pruning, (b) shows results for only fine-tuning the output layer. Each point corresponds to a pruned model instance. The dashed horizontal line denotes the unpruned baseline accuracy.}
    \label{fig:conv2net_asf-s_variants}
    \end{minipage}%
\hfill
\begin{minipage}{0.45\textwidth}
\centering
    \includegraphics[width=\linewidth]{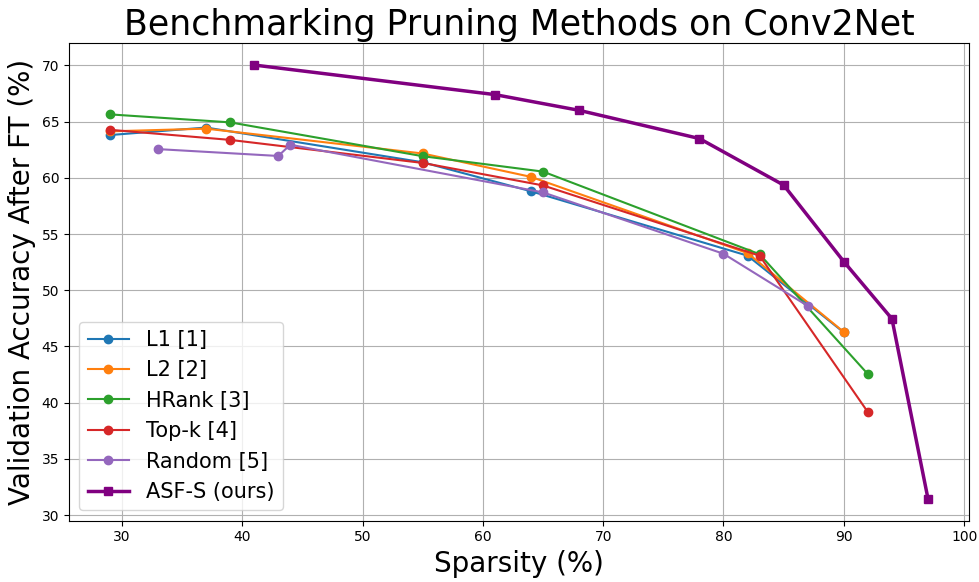}
    \caption{Our results for ASF-S following structured pruning and fine-tuning of only the unpruned output layer. In order to compare using PGI and PCA as a criterion for prune candidate selection we experiment with 3 other pruning criterion ([1],[2],[3]) \cite{han2015learning,han2015deep,lin2020hrank}. We also conducted experiments using a Top-k selection rule using an L1-style criterion \cite{he2023structured} ([4]). We also include random pruning for comparison.}
    \label{fig:fc_only_conv2net}
\end{minipage}
\end{figure}

\begin{table*}[!t]
\tiny
\centering
\caption{Comparison of pruning methods across models. Acc~$\downarrow$ and
\%~FLOPs~$\downarrow$ denote reduction. Lower Acc~$\downarrow$ is better,
while higher \%~FLOPs~$\downarrow$ is better. Datasets:
[1]~MNIST, [2]~CIFAR-10, [3]~CIFAR-100.}
\label{tab:all_methods_single}
\setlength{\tabcolsep}{2.5pt}
\renewcommand{\arraystretch}{1.05}
\resizebox{\textwidth}{!}{%
\begin{tabular}{l l c c c c c c}
\toprule
\textbf{Method} & \textbf{Metric}
    & \textbf{LeNet\_300\_100 [1]}
    & \textbf{Conv2Net [2]}
    & \textbf{Conv6Net [2]}
    & \textbf{Conv2Net [3]}
    & \textbf{Conv6Net [3]}
    & \textbf{VGG-16 [3]}\\
\midrule
\multirow{3}{*}{SVD~\cite{wang2019svd}}
    & Acc $\downarrow$         & ---   & 44.13 & 64.25 & 15.85 & 43.48 & 50.55 \\
    & \% Sparsity              & ---   & 70    & 70    & 70    & 70    & 70 \\
    & \% FLOPs $\downarrow$    & ---   & 42.1  & 27.2  & 87.81 & 89.76 & 90.79 \\

\midrule
\multirow{3}{*}{SLIMING~\cite{zniyed2025singular}}
    & Acc $\downarrow$         & ---   & 5.05  & 6.15 & 15.61 & 38.00 & 48.68 \\
    & \% Sparsity              & ---   & 70    & 70   & 70    & 70    & 70 \\
    & \% FLOPs $\downarrow$    & ---   & 90.50 & 17.2 & 87.81 & 89.76 & 90.79 \\
\midrule
\multirow{3}{*}{SNOWS~\cite{lucas2024preserving}}
    & Acc $\downarrow$         & ---   & 29.0 & 22.27  & 18.08 & 42.06 & 47.25 \\
    & \% Sparsity              & ---   & 70    & 70  & 70    & 70    & 70 \\
    & \% FLOPs $\downarrow$    & ---   & 63.45 & 66.60  & 87.81 & 89.76 & 90.79 \\
\midrule
\multirow{3}{*}{\textbf{ASF-S (ours)}}
    & Acc $\downarrow$         & \textbf{0}     & \textbf{7.74}  & \textbf{0}    & \textbf{3.55}  & \textbf{18.59} & \textbf{20.46} \\
    & \% Sparsity              & \textbf{70}    & \textbf{70}    & \textbf{70}   & \textbf{70}    & \textbf{70}    & \textbf{70} \\
    & \% FLOPs $\downarrow$    & \textbf{71.34} & \textbf{57.01} & \textbf{36.01} & \textbf{33.24} & \textbf{48.16} & \textbf{93.68}\\

\bottomrule
\end{tabular}%
}
\end{table*}
\subsection{PGI for Pruning Candidate Selection}
We present the accuracy and sparsity of the best CNN ASF-S model variants, against variants extracted using other pruning criterion in Figure~\ref{fig:fc_only_conv2net}. Post-training structured pruning methods can be applied without retraining the pruned layers, although fine-tuning is often used to recover performance. Despite being a post-training structured pruning method, ASF-S allows the network to recover accuracy through fine-tuning of only the output layer. This contrasts with conventional structured pruning approaches, where accuracy typically degrades requiring full retraining, or iterative training cycles or pruning done at initialization \cite{lee2018snip,frankle2019lottery,liu2019rethinking}. The benefit of using PGI scores as a criterion for candidate selection comes from the basis that these scores reflect a topographical organization (Function) that must be preserved within the output layer. Our results demonstrate the benefit of this criterion as opposed to those included in Figure~\ref{fig:fc_only_conv2net}, as we are able to return much closer to baseline accuracy (within 10\%) at sparsity levels up to 80\%., all without the need for iterative pruning cycles. All benchmarks criterion followed the same fine-tuning strategy, with pruned layers frozen. We present benchmark results against other model compression techniques also in Table~\ref{tab:all_methods_single}, where we show reduction in model accuracy and FLOPs at 70\% sparsity levels as an example. Here a smaller reduction in accuracy is best as we demonstrate the effectiveness of preserving model performance despite structured pruning without fine-tuning of the pruned layers. Our method also yields comparable reductions in FLOPs demonstrating its effectiveness of reducing computational cost as measured by FLOPs, more results regarding this can also be found in section E.2 of the Appendix.  

\subsection{ASF-S On A Transformer Architecture}
To test whether preserving functional organisation transfers beyond vision models, we extend evaluation to a Transformer classifier (\textsc{BERT-base}) on \textsc{AG News}. We prune FFN intermediate neurons in selected encoder blocks (\{0,5,11\}) and compare PG-guided pruning (ASF-S) against weight-driven structured baselines (L1/L2) and random pruning at matched sparsity and identical \emph{head-only} fine-tuning budgets. Across moderate-to-high sparsity (60--70\%), PG-guided pruning yields stronger accuracy recovery than random and remains competitive with L1/L2, while preserving downstream performance without requiring full-model retraining. We additionally report calibration (ECE), highlighting that aggressive FFN pruning can shift confidence even when accuracy is largely preserved; this motivates reporting \emph{accuracy--calibration trade-offs} rather than accuracy alone for deployment-oriented pruning. Further results can be found in section G.4 of the Appendix.

\section{Conclusion \& Limitations}
We introduce Artificial Structure Function Search (ASF-S), a neuroscience-inspired structured pruning framework that identifies accurate sparse networks from fully trained ANNs. Building on the Lottery Ticket Hypothesis \cite{frankle2019lottery}, ASF-S is guided by artificial functional connectivity (AFC), which captures the proposed relationship between network structure (prunable layer components) and function (the topographical organization of the output layer enabling classification). We propose that preserving AFC during pruning can guide layer removal without requiring post-prune fine-tuning. ASF-S pruned variants recover accuracy to within 10\% of baseline without fine-tuning pruned layers, suggesting the retained structure preserves the functional relationships driving task performance. This offers a more interpretable and energy-efficient approach to model compression, with future work exploring iterative pruning, larger-scale validation, and improved PGI thresholding.

Despite promising results, our method has some limitations. First, computational constraints prevented evaluation on large-scale datasets such as ImageNet, leaving scalability an open question. Second, the compatibility of our approach with residual architectures (e.g., ResNet) remains untested, as skip connections introduce structural dependencies that may affect pruning efficacy. Third, unlike many structured pruning techniques that employ iterative pruning regimes, our method performs a single pruning step on a pre-trained model thus the effect of an iterative variant is yet to be explored. Finally, storing per-class activations incurs significant disk overhead, which adds to the challenge of scaling to more complex datasets.
\bibliographystyle{abbrvnat}
\bibliography{ref}

@article{cheng2023survey,
  title={A survey on deep neural network pruning-taxonomy, comparison},
  author={Cheng, Hongrong and Zhang, Miao and Shi, Javen Qinfeng},
  journal={Analysis, and Recommendations},
  year={2023}
}

@inproceedings{devlin2019bert,
  title={Bert: Pre-training of deep bidirectional transformers for language understanding},
  author={Devlin, Jacob and Chang, Ming-Wei and Lee, Kenton and Toutanova, Kristina},
  booktitle={Proceedings of the 2019 conference of the North American chapter of the association for computational linguistics: human language technologies, volume 1 (long and short papers)},
  pages={4171--4186},
  year={2019}
}

@article{simonyan2014very,
  title={Very deep convolutional networks for large-scale image recognition},
  author={Simonyan, Karen and Zisserman, Andrew},
  journal={arXiv preprint arXiv:1409.1556},
  year={2014}
}

@article{han2015deep,
  title={Deep compression: Compressing deep neural networks with pruning, trained quantization and huffman coding},
  author={Han, Song and Mao, Huizi and Dally, William J},
  journal={arXiv preprint arXiv:1510.00149},
  year={2015}
}

@article{han2015learning,
  title={Learning both weights and connections for efficient neural network},
  author={Han, Song and Pool, Jeff and Tran, John and Dally, William},
  journal={Advances in neural information processing systems},
  volume={28},
  year={2015}
}

@inproceedings{luo2017thinet,
  title={Thinet: A filter level pruning method for deep neural network compression},
  author={Luo, Jian-Hao and Wu, Jianxin and Lin, Weiyao},
  booktitle={Proceedings of the IEEE international conference on computer vision},
  pages={5058--5066},
  year={2017}
}

@article{sehwag2020hydra,
  title={Hydra: Pruning adversarially robust neural networks},
  author={Sehwag, Vikash and Wang, Shiqi and Mittal, Prateek and Jana, Suman},
  journal={Advances in Neural Information Processing Systems},
  volume={33},
  pages={19655--19666},
  year={2020}
}

@article{ashkboos2024slicegpt,
  title={Slicegpt: Compress large language models by deleting rows and columns},
  author={Ashkboos, Saleh and Croci, Maximilian L and Nascimento, Marcelo Gennari do and Hoefler, Torsten and Hensman, James},
  journal={arXiv preprint arXiv:2401.15024},
  year={2024}
}

@article{ma2023llm,
  title={Llm-pruner: On the structural pruning of large language models},
  author={Ma, Xinyin and Fang, Gongfan and Wang, Xinchao},
  journal={Advances in neural information processing systems},
  volume={36},
  pages={21702--21720},
  year={2023}
}

@inproceedings{li2021npas,
  title={Npas: A compiler-aware framework of unified network pruning and architecture search for beyond real-time mobile acceleration},
  author={Li, Zhengang and Yuan, Geng and Niu, Wei and Zhao, Pu and Li, Yanyu and Cai, Yuxuan and Shen, Xuan and Zhan, Zheng and Kong, Zhenglun and Jin, Qing and others},
  booktitle={Proceedings of the IEEE/CVF Conference on Computer Vision and Pattern Recognition},
  pages={14255--14266},
  year={2021}
}

@article{he2023structured,
  title={Structured pruning for deep convolutional neural networks: A survey},
  author={He, Yang and Xiao, Lingao},
  journal={IEEE transactions on pattern analysis and machine intelligence},
  volume={46},
  number={5},
  pages={2900--2919},
  year={2023},
  publisher={IEEE}
}

@article{morcos2019one,
  title={One ticket to win them all: generalizing lottery ticket initializations across datasets and optimizers},
  author={Morcos, Ari and Yu, Haonan and Paganini, Michela and Tian, Yuandong},
  journal={Advances in neural information processing systems},
  volume={32},
  year={2019}
}

@article{lee2018snip,
  title={Snip: Single-shot network pruning based on connection sensitivity},
  author={Lee, Namhoon and Ajanthan, Thalaiyasingam and Torr, Philip HS},
  journal={arXiv preprint arXiv:1810.02340},
  year={2018}
}

@article{ronde2024default,
  title={Default mode network dynamics: An integrated neurocircuitry perspective on social dysfunction in human brain disorders},
  author={Ronde, Mirthe and van der Zee, Eddy A and Kas, Martien JH},
  journal={Neuroscience \& Biobehavioral Reviews},
  volume={164},
  pages={105839},
  year={2024},
  publisher={Elsevier}
}

@article{smallwood2021default,
  title={The default mode network in cognition: a topographical perspective},
  author={Smallwood, Jonathan and Bernhardt, Boris C and Leech, Robert and Bzdok, Danilo and Jefferies, Elizabeth and Margulies, Daniel S},
  journal={Nature reviews neuroscience},
  volume={22},
  number={8},
  pages={503--513},
  year={2021},
  publisher={Nature Publishing Group UK London}
}

@article{margulies2016situating,
  title={Situating the default-mode network along a principal gradient of macroscale cortical organization},
  author={Margulies, Daniel S and Ghosh, Satrajit S and Goulas, Alexandros and Falkiewicz, Marcel and Huntenburg, Julia M and Langs, Georg and Bezgin, Gleb and Eickhoff, Simon B and Castellanos, F Xavier and Petrides, Michael and others},
  journal={Proceedings of the National Academy of Sciences},
  volume={113},
  number={44},
  pages={12574--12579},
  year={2016},
  publisher={National Academy of Sciences}
}

@inproceedings{evci2020rigging,
  title={Rigging the lottery: Making all tickets winners},
  author={Evci, Utku and Gale, Trevor and Menick, Jacob and Castro, Pablo Samuel and Elsen, Erich},
  booktitle={International conference on machine learning},
  pages={2943--2952},
  year={2020},
  organization={PMLR}
}

@inproceedings{frantar2023sparsegpt,
  title={Sparsegpt: Massive language models can be accurately pruned in one-shot},
  author={Frantar, Elias and Alistarh, Dan},
  booktitle={International conference on machine learning},
  pages={10323--10337},
  year={2023},
  organization={PMLR}
}

@article{you2019gate,
  title={Gate decorator: Global filter pruning method for accelerating deep convolutional neural networks},
  author={You, Zhonghui and Yan, Kun and Ye, Jinmian and Ma, Meng and Wang, Ping},
  journal={Advances in neural information processing systems},
  volume={32},
  year={2019}
}

@article{shao2023omniquant,
  title={Omniquant: Omnidirectionally calibrated quantization for large language models},
  author={Shao, Wenqi and Chen, Mengzhao and Zhang, Zhaoyang and Xu, Peng and Zhao, Lirui and Li, Zhiqian and Zhang, Kaipeng and Gao, Peng and Qiao, Yu and Luo, Ping},
  journal={arXiv preprint arXiv:2308.13137},
  year={2023}
}

@article{gu2023minillm,
  title={Minillm: Knowledge distillation of large language models},
  author={Gu, Yuxian and Dong, Li and Wei, Furu and Huang, Minlie},
  journal={arXiv preprint arXiv:2306.08543},
  year={2023}
}

@article{zhao2023task,
  title={Task fMRI paradigms may capture more behaviorally relevant information than resting-state functional connectivity},
  author={Zhao, Weiqi and Makowski, Carolina and Hagler, Donald J and Garavan, Hugh P and Thompson, Wesley K and Greene, Deanna J and Jernigan, Terry L and Dale, Anders M},
  journal={NeuroImage},
  volume={270},
  pages={119946},
  year={2023},
  publisher={Elsevier}
}

@article{frankle2019lottery,
  title={The lottery ticket hypothesis: Finding sparse, trainable neural networks},
  author={Frankle, Jonathan and Carbin, Michael},
  booktitle={International Conference on Learning Representations (ICLR)},
  year={2019},
  url={https://arxiv.org/abs/1803.03635}
}

@article{paquola2025architecture,
  title={The architecture of the human default mode network explored through cytoarchitecture, wiring and signal flow},
  author={Paquola, Casey and Garber, Margaret and Fr{\"a}ssle, Stefan and Royer, Jessica and Zhou, Yigu and Tavakol, Shahin and Rodriguez-Cruces, Raul and Cabalo, Donna Gift and Valk, Sofie and Eickhoff, Simon B and others},
  journal={Nature neuroscience},
  volume={28},
  number={3},
  pages={654--664},
  year={2025},
  publisher={Nature Publishing Group US New York}
}

@article{lecun1989optimal,
  title={Optimal brain damage},
  author={LeCun, Yann and Denker, John and Solla, Sara},
  journal={Advances in neural information processing systems},
  volume={2},
  year={1989}
}

@article{liu2019rethinking,
  title={Rethinking the Value of Network Pruning},
  author={Liu, Zhuang and Sun, Mingjie and Zhou, Tinghui and Huang, Gao and Darrell, Trevor},
  journal={arXiv preprint arXiv:1810.05270},
  year={2018},
  url={https://arxiv.org/abs/1810.05270}
}

@article{han2016eie,
  title={EIE: Efficient inference engine on compressed deep neural network},
  author={Han, Song and Liu, Xingyu and Mao, Huizi and Pu, Jing and Pedram, Ardavan and Horowitz, Mark A and Dally, William J},
  journal={ACM SIGARCH Computer Architecture News},
  volume={44},
  number={3},
  pages={243--254},
  year={2016},
  publisher={ACM New York, NY, USA}
}

@inproceedings{wang2020grasp,
  title={Picking Winning Tickets Before Training by Preserving Gradient Flow},
  author={Wang, Chaoqi and Zhang, Guodong and Grosse, Roger},
  booktitle={International Conference on Learning Representations (ICLR)},
  year={2020},
  url={https://openreview.net/forum?id=Hyg6WgHKwS}
}

@article{guo2025slimllm,
  title={SlimLLM: Accurate Structured Pruning for Large Language Models},
  author={Guo, Jialong and Chen, Xinghao and Tang, Yehui and Wang, Yunhe},
  journal={arXiv preprint arXiv:2505.22689},
  year={2025}
}

@article{fox2007spontaneous,
  title={Spontaneous fluctuations in brain activity observed with functional magnetic resonance imaging},
  author={Fox, Michael D and Raichle, Marcus E},
  journal={Nature reviews neuroscience},
  volume={8},
  number={9},
  pages={700--711},
  year={2007},
  publisher={Nature Publishing Group UK London}
}

@article{cohen2008defining,
  title={Defining functional areas in individual human brains using resting functional connectivity MRI},
  author={Cohen, Alexander L and Fair, Damien A and Dosenbach, Nico UF and Miezin, Francis M and Dierker, Donna and Van Essen, David C and Schlaggar, Bradley L and Petersen, Steven E},
  journal={Neuroimage},
  volume={41},
  number={1},
  pages={45--57},
  year={2008},
  publisher={Elsevier}
}

@article{van2010exploring,
  title={Exploring the brain network: a review on resting-state fMRI functional connectivity},
  author={Van Den Heuvel, Martijn P and Pol, Hilleke E Hulshoff},
  journal={European neuropsychopharmacology},
  volume={20},
  number={8},
  pages={519--534},
  year={2010},
  publisher={Elsevier}
}

@article{lecun2002gradient,
  title={Gradient-based learning applied to document recognition},
  author={LeCun, Yann and Bottou, L{\'e}on and Bengio, Yoshua and Haffner, Patrick},
  journal={Proceedings of the IEEE},
  volume={86},
  number={11},
  pages={2278--2324},
  year={2002},
  publisher={Ieee}
}

@article{ling2024slimgpt,
  title={Slimgpt: Layer-wise structured pruning for large language models},
  author={Ling, Gui and Wang, Ziyang and Liu, Qingwen},
  journal={Advances in Neural Information Processing Systems},
  volume={37},
  pages={107112--107137},
  year={2024}
}

@inproceedings{zeiler2014visualizing,
  title={Visualizing and understanding convolutional networks},
  author={Zeiler, Matthew D and Fergus, Rob},
  booktitle={European conference on computer vision},
  pages={818--833},
  year={2014},
  organization={Springer}
}

@inproceedings{he2017channel,
  title={Channel pruning for accelerating very deep neural networks},
  author={He, Yihui and Zhang, Xiangyu and Sun, Jian},
  booktitle={Proceedings of the IEEE international conference on computer vision},
  pages={1389--1397},
  year={2017}
}

@inproceedings{tian2024visual,
  title={Visual AutoRegressive Modeling: Scalable Image Generation via Next-Scale Prediction},
  author={Tian, Keyu and Jiang, Yi and Yuan, Zehuan and Peng, Bingyue and Wang, Liwei},
  booktitle={Advances in Neural Information Processing Systems (NeurIPS)},
  year={2024},
  url={https://proceedings.neurips.cc/paper_files/paper/2024/file/9a24e284b187f662681440ba15c416fb-Paper-Conference.pdf}
}

@article{Greenacre2022_PCA,
  author = {Greenacre, Michael and Groenen, Patrick J. F. and Hastie, Trevor and D'Enza, Alfonso Iodice and Markos, Angelos and Tuzhilina, Elena},
  title = {Principal component analysis},
  journal = {Nature Reviews Methods Primers},
  volume = {2},
  number = {1},
  pages = {100},
  year = {2022},
  doi = {10.1038/s43586-022-00184-w}
}

@article{Coifman2006_DiffusionMaps,
  author = {Coifman, Ronald R. and Lafon, Stéphane},
  title = {Diffusion maps},
  journal = {Applied and Computational Harmonic Analysis},
  volume = {21},
  number = {1},
  pages = {5--30},
  year = {2006},
  doi = {10.1016/j.acha.2006.04.006}
}

@article{Yeo2011_7Networks,
  author = {Yeo, B.T.T. and Krienen, F.M. and Sepulcre, J. and Sabuncu, M.R. and Lashkari, D. and Hollinshead, M. and Roffman, J.L. and Smoller, J.W. and Zollei, L. and Polimeni, J.R. and Fischl, B. and Liu, H. and Buckner, R.L.},
  title = {The organization of the human cerebral cortex estimated by intrinsic functional connectivity},
  journal = {Journal of Neurophysiology},
  volume = {106},
  number = {3},
  pages = {1125--1165},
  year = {2011},
  doi = {10.1152/jn.00338.2011}
}

@article{oliver2019quantifying,
  title={Quantifying the variability in resting-state networks},
  author={Oliver, Isaura and Hlinka, Jaroslav and Kopal, Jakub and Davidsen, J{\"o}rn},
  journal={Entropy},
  volume={21},
  number={9},
  pages={882},
  year={2019},
  publisher={MDPI}
}

@article{kriegeskorte2008representational,
  title={Representational similarity analysis – connecting the branches of systems neuroscience},
  author={Kriegeskorte, Nikolaus and Mur, Marieke and Bandettini, Peter},
  journal={Frontiers in Systems Neuroscience},
  volume={2},
  pages={4},
  year={2008},
  doi={10.3389/neuro.06.004.2008}
}

@inproceedings{kornblith2019similarity,
  title={Similarity of Neural Network Representations Revisited},
  author={Kornblith, Simon and Norouzi, Mohammad and Lee, Honglak and Hinton, Geoffrey},
  booktitle={Proceedings of the 36th International Conference on Machine Learning (ICML)},
  year={2019},
  pages={3519--3529},
  url={http://proceedings.mlr.press/v97/kornblith19a.html}
}

@inproceedings{zhang2021hardware,
  title={Hardware-software codesign of weight reshaping and systolic array multiplexing for efficient CNNs},
  author={Zhang, Jingyao and Gu, Huaxi and Zhang, Grace Li and Li, Bing and Schlichtmann, Ulf},
  booktitle={2021 Design, Automation \& Test in Europe Conference \& Exhibition (DATE)},
  pages={667--672},
  year={2021},
  organization={IEEE}
}

@article{pachauri2014permutation,
  title={Permutation diffusion maps (pdm) with application to the image association problem in computer vision},
  author={Pachauri, Deepti and Kondor, Risi and Sargur, Gautam and Singh, Vikas},
  journal={Advances in Neural Information Processing Systems},
  volume={27},
  year={2014}
}

@article{zhang2018diffusion,
  title={Diffusion maps for textual network embedding},
  author={Zhang, Xinyuan and Li, Yitong and Shen, Dinghan and Carin, Lawrence},
  journal={Advances in Neural Information Processing Systems},
  volume={31},
  year={2018}
}

@article{valeriani2023geometry,
  title={The geometry of hidden representations of large transformer models},
  author={Valeriani, Lucrezia and Doimo, Diego and Cuturello, Francesca and Laio, Alessandro and Ansuini, Alessio and Cazzaniga, Alberto},
  journal={Advances in Neural Information Processing Systems},
  volume={36},
  pages={51234--51252},
  year={2023}
}

@article{kraidia2024defense,
  title={Defense against adversarial attacks: robust and efficient compressed optimized neural networks},
  author={Kraidia, Insaf and Ghenai, Afifa and Belhaouari, Samir Brahim},
  journal={Scientific Reports},
  volume={14},
  number={1},
  pages={6420},
  year={2024},
  publisher={Nature Publishing Group UK London}
}

@article{demir2025overview,
  title={Overview of Memory-Efficient Architectures for Deep Learning in Real-Time Systems},
  author={Demir, Bilgin and Domazet, Ervin and Mechkaroska, Daniela},
  journal={Engineering Proceedings},
  volume={104},
  number={1},
  pages={77},
  year={2025},
  publisher={MDPI}
}

@inproceedings{malach2020proving,
  title={Proving the lottery ticket hypothesis: Pruning is all you need},
  author={Malach, Eran and Yehudai, Gilad and Shalev-Schwartz, Shai and Shamir, Ohad},
  booktitle={International Conference on Machine Learning},
  pages={6682--6691},
  year={2020},
  organization={PMLR}
}

@article{pensia2020optimal,
  title={Optimal lottery tickets via subset sum: Logarithmic over-parameterization is sufficient},
  author={Pensia, Ankit and Rajput, Shashank and Nagle, Alliot and Vishwakarma, Harit and Papailiopoulos, Dimitris},
  journal={Advances in neural information processing systems},
  volume={33},
  pages={2599--2610},
  year={2020}
}

@article{chen2020lottery,
  title={The lottery ticket hypothesis for pre-trained bert networks},
  author={Chen, Tianlong and Frankle, Jonathan and Chang, Shiyu and Liu, Sijia and Zhang, Yang and Wang, Zhangyang and Carbin, Michael},
  journal={Advances in neural information processing systems},
  volume={33},
  pages={15834--15846},
  year={2020}
}

@article{yao2023probabilistic,
  title={Probabilistic modeling: Proving the lottery ticket hypothesis in spiking neural network},
  author={Yao, Man and Chou, Yuhong and Zhao, Guangshe and Zheng, Xiawu and Tian, Yonghong and Xu, Bo and Li, Guoqi},
  journal={arXiv preprint arXiv:2305.12148},
  year={2023}
}

@article{fotiadis2024structure,
  title={Structure--function coupling in macroscale human brain networks},
  author={Fotiadis, Panagiotis and Parkes, Linden and Davis, Kathryn A and Satterthwaite, Theodore D and Shinohara, Russell T and Bassett, Dani S},
  journal={Nature Reviews Neuroscience},
  volume={25},
  number={10},
  pages={688--704},
  year={2024},
  publisher={Nature Publishing Group UK London}
}

@article{uddin2008residual,
  title={Residual functional connectivity in the split-brain revealed with resting-state functional MRI},
  author={Uddin, Lucina Q and Mooshagian, Eric and Zaidel, Eran and Scheres, Anouk and Margulies, Daniel S and Kelly, AM Clare and Shehzad, Zarrar and Adelstein, Jonathan S and Castellanos, F Xavier and Biswal, Bharat B and others},
  journal={Neuroreport},
  volume={19},
  number={7},
  pages={703--709},
  year={2008},
  publisher={LWW}
}

@article{uddin2013complex,
  title={Complex relationships between structural and functional brain connectivity},
  author={Uddin, Lucina Q},
  journal={Trends in cognitive sciences},
  volume={17},
  number={12},
  pages={600--602},
  year={2013},
  publisher={Elsevier}
}

@article{whitfield2012conn,
  title={Conn: a functional connectivity toolbox for correlated and anticorrelated brain networks},
  author={Whitfield-Gabrieli, Susan and Nieto-Castanon, Alfonso},
  journal={Brain connectivity},
  volume={2},
  number={3},
  pages={125--141},
  year={2012},
  publisher={Mary Ann Liebert, Inc. 140 Huguenot Street, 3rd Floor New Rochelle, NY 10801 USA}
}

@article{esteban2019fmriprep,
  title={fMRIPrep: a robust preprocessing pipeline for functional MRI},
  author={Esteban, Oscar and Markiewicz, Christopher J and Blair, Ross W and Moodie, Craig A and Isik, A Ilkay and Erramuzpe, Asier and Kent, James D and Goncalves, Mathias and DuPre, Elizabeth and Snyder, Madeleine and others},
  journal={Nature methods},
  volume={16},
  number={1},
  pages={111--116},
  year={2019},
  publisher={Nature Publishing Group US New York}
}

@article{sinha2023intracranial,
  title={Intracranial EEG structure-function coupling and seizure outcomes after epilepsy surgery},
  author={Sinha, Nishant and Duncan, John S and Diehl, Beate and Chowdhury, Fahmida A and De Tisi, Jane and Miserocchi, Anna and McEvoy, Andrew William and Davis, Kathryn A and Vos, Sjoerd B and Winston, Gavin P and others},
  journal={Neurology},
  volume={101},
  number={13},
  pages={e1293--e1306},
  year={2023},
  publisher={Lippincott Williams \& Wilkins Hagerstown, MD}
}

@article{honey2010can,
  title={Can structure predict function in the human brain?},
  author={Honey, Christopher J and Thivierge, Jean-Philippe and Sporns, Olaf},
  journal={Neuroimage},
  volume={52},
  number={3},
  pages={766--776},
  year={2010},
  publisher={Elsevier}
}

@article{mesulam1998sensation,
  title={From sensation to cognition.},
  author={Mesulam, M-Marsel},
  journal={Brain: a journal of neurology},
  volume={121},
  number={6},
  pages={1013--1052},
  year={1998}
}

@article{zhang2023functional,
  title={Functional network: A novel framework for interpretability of deep neural networks},
  author={Zhang, Ben and Dong, Zhetong and Zhang, Junsong and Lin, Hongwei},
  journal={Neurocomputing},
  volume={519},
  pages={94--103},
  year={2023},
  publisher={Elsevier}
}

@inproceedings{songdechakraiwut2025functional,
  title={Functional connectomes of neural networks},
  author={Songdechakraiwut, Tananun and Wu, Yutong},
  booktitle={Proceedings of the AAAI Conference on Artificial Intelligence},
  volume={39},
  number={19},
  pages={20558--20566},
  year={2025}
}

@article{dapello2020simulating,
  title={Simulating a primary visual cortex at the front of CNNs improves robustness to image perturbations},
  author={Dapello, Joel and Marques, Tiago and Schrimpf, Martin and Geiger, Franziska and Cox, David and DiCarlo, James J},
  journal={Advances in Neural Information Processing Systems},
  volume={33},
  pages={13073--13087},
  year={2020}
}

@article{janusz2025one,
  title={One Shot vs. Iterative: Rethinking Pruning Strategies for Model Compression},
  author={Janusz, Mikolaj and Wojnar, Tomasz and Li, Yawei and Benini, Luca and Adamczewski, Kamil},
  journal={arXiv preprint arXiv:2508.13836},
  year={2025}
}

@inproceedings{blalock2020state,
  title={What is the State of Neural Network Pruning?},
  author={Blalock, Davis and Gonzalez Ortiz, Jose Javier and Frankle, Jonathan and Guttag, John},
  booktitle={Proceedings of Machine Learning and Systems (MLSys)},
  year={2020}
}

@article{zhang2018intrinsic,
  title={Intrinsic neural linkage between primary visual area and default mode network in human brain: evidence from visual mental imagery},
  author={Zhang, Zheng and Zhang, Delong and Wang, Zengjian and Li, Junchao and Lin, Yuting and Chang, Song and Huang, Ruiwang and Liu, Ming},
  journal={Neuroscience},
  volume={379},
  pages={13--21},
  year={2018},
  publisher={Elsevier}
}

@article{guo2024interpretable,
  title={Interpretable task-inspired adaptive filter pruning for neural networks under multiple constraints},
  author={Guo, Yang and Gao, Wei and Li, Ge},
  journal={International Journal of Computer Vision},
  volume={132},
  number={6},
  pages={2060--2076},
  year={2024},
  publisher={Springer}
}

@inproceedings{lin2020hrank,
  title={Hrank: Filter pruning using high-rank feature map},
  author={Lin, Mingbao and Ji, Rongrong and Wang, Yan and Zhang, Yichen and Zhang, Baochang and Tian, Yonghong and Shao, Ling},
  booktitle={Proceedings of the IEEE/CVF conference on computer vision and pattern recognition},
  pages={1529--1538},
  year={2020}
}

@article{zniyed2025singular,
  title={Singular values-driven automated filter pruning},
  author={Zniyed, Yassine and Nguyen, Thanh Phuong and others},
  journal={Neural Networks},
  pages={107857},
  year={2025},
  publisher={Elsevier}
}

@article{fukushima1983neocognitron,
  title={Neocognitron: A neural network model for a mechanism of visual pattern recognition},
  author={Fukushima, Kunihiko and Miyake, Sei and Ito, Takayuki},
  journal={IEEE transactions on systems, man, and cybernetics},
  number={5},
  pages={826--834},
  year={1983},
  publisher={IEEE}
}

@article{yamins2014performance,
  title={Performance-optimized hierarchical models predict neural responses in higher visual cortex},
  author={Yamins, Daniel LK and Hong, Ha and Cadieu, Charles F and Solomon, Ethan A and Seibert, Darren and DiCarlo, James J},
  journal={Proceedings of the national academy of sciences},
  volume={111},
  number={23},
  pages={8619--8624},
  year={2014},
  publisher={National Academy of Sciences}
}

@inproceedings{bau2017network,
  title={Network dissection: Quantifying interpretability of deep visual representations},
  author={Bau, David and Zhou, Bolei and Khosla, Aditya and Oliva, Aude and Torralba, Antonio},
  booktitle={Proceedings of the IEEE conference on computer vision and pattern recognition},
  pages={6541--6549},
  year={2017}
}

@article{suarez2020linking,
  title={Linking structure and function in macroscale brain networks},
  author={Su{\'a}rez, Laura E and Markello, Ross D and Betzel, Richard F and Misic, Bratislav},
  journal={Trends in cognitive sciences},
  volume={24},
  number={4},
  pages={302--315},
  year={2020},
  publisher={Elsevier}
}

@article{mivsic2016network,
  title={Network-level structure-function relationships in human neocortex},
  author={Mi{\v{s}}i{\'c}, Bratislav and Betzel, Richard F and De Reus, Marcel A and Van Den Heuvel, Martijn P and Berman, Marc G and McIntosh, Anthony R and Sporns, Olaf},
  journal={Cerebral Cortex},
  volume={26},
  number={7},
  pages={3285--3296},
  year={2016},
  publisher={Oxford University Press}
}

@article{jiao2025touching,
  title={“Touching” the brain: braille reading mitigates the SC--FC decoupling of brain networks in congenital blindness},
  author={Jiao, Saiyi and Wang, Ke and Zeng, Jiahong and Cui, Zhenjiang and Luo, Yudan and Han, Zaizhu},
  journal={Brain Structure and Function},
  volume={230},
  number={6},
  pages={114},
  year={2025},
  publisher={Springer}
}

@inproceedings{laenen2023one,
  title={One-shot neural network pruning via spectral graph sparsification},
  author={Laenen, Steinar},
  booktitle={Topological, Algebraic and Geometric Learning Workshops 2023},
  pages={60--71},
  year={2023},
  organization={PMLR}
}

@article{alon1986eigenvalues,
  title     = {Eigenvalues and Expanders},
  author    = {Alon, Noga},
  journal   = {Combinatorica},
  volume    = {6},
  number    = {2},
  pages     = {83--96},
  year      = {1986}
}

@article{liu2025structure,
  title={Structure-Aware Automatic Channel Pruning by Searching with Graph Embedding},
  author={Liu, Zifan and Cao, Yuan and Yu, Yanwei and Qi, Heng and Gui, Jie},
  journal={arXiv preprint arXiv:2506.11469},
  year={2025}
}

@article{lucas2024preserving,
  title={Preserving deep representations in one-shot pruning: A hessian-free second-order optimization framework},
  author={Lucas, Ryan and Mazumder, Rahul},
  journal={arXiv preprint arXiv:2411.18376},
  year={2024}
}

@inproceedings{wang2019svd,
  title={SVD-based channel pruning for convolutional neural network in acoustic scene classification model},
  author={Wang, Jun and Li, Shengchen and Wang, Wenwu},
  booktitle={2019 IEEE International Conference on Multimedia \& Expo Workshops (ICMEW)},
  pages={390--395},
  year={2019},
  organization={IEEE}
}

\newpage
\makeatletter
\@addtoreset{theorem}{section}
\@addtoreset{proposition}{section}
\@addtoreset{assumption}{section}
\makeatother
\appendix
\counterwithin{subsection}{section}

\section{Additional Theoretical Background}
\textbf{Topological Organization and Structure Function Relationships In Neuroscience.}
In Neuroscience, topological organization can refer to structural or functional connectivity between network components, this concept is explored for its role in determining the processing behind cognition and output. Structural connectivity is usually determined by calculating the density of fibrous connections between two brain regions, the greater the density the greater structural connectivity two such regions will have. Functional connectivity on the other hand, is derived from observing the statistical similarity between neuronal activity timeseries, that can be extracted using various neuroimaging methods including via fluctuations in the BOLD signal from functional magnetic resonance imaging (fMRI) \cite{sinha2023intracranial,whitfield2012conn,esteban2019fmriprep}. The resultant relationship between structural connectivity and functional connectivity is then measured using structure-function coupling (SFC) techniques \cite{honey2010can}, to understand how structure and function give rise to tasks like object recognition \cite{jiao2025touching,suarez2020linking,yamins2014performance,mivsic2016network}.

\textbf{Intermediate Processing In the Complex Organisms}
These topological organization principles in the brain are believed to facilitate the intermediate processing that enables organisms to take in stimuli from the environment and coordinate actions in response. As shown in section A of Figure~\ref{fig:mesulam}, simple organisms such as frogs demonstrate a topological organization that facilitates simple processing, like autonomous responses such as the frog tongue reflex in response to a fly \cite{mesulam1998sensation}. Humans on the other hand have more complex intermediate processing that is flexible and facilitates cognitive features such as memory and thoughtful decision making \cite{mesulam1998sensation}.

\begin{figure}[!h]
    \centering
    \includegraphics[width=0.7\linewidth]{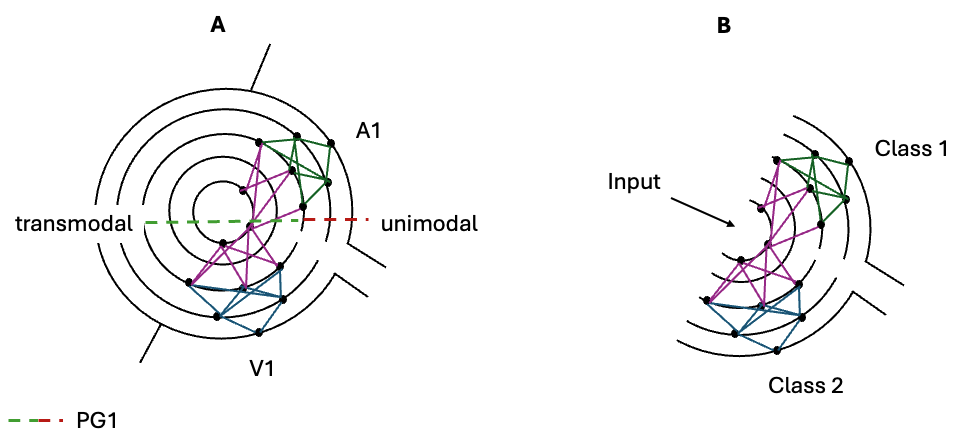}
    \caption{A: Figure adapted from \cite{mesulam1998sensation} and \cite{margulies2016situating} demonstrating the unimodal transmodal axis emergence from the topological organization of brain regions that are involved in primary sensory processing, to regions involved in higher order association. Visual input to the visual cortex (V1) is processed, independently of auditory input to A1, in unimodal regions before multi-modal association happens in transmodal regions. In B we demonstrate a similar concept, this time outlining how ANNs take in input and differentiate between classes as output. As shown, each class could be considered as a separate stimuli as output layers strive to separate learned represenations for correct classification. From this we propose understanding artificial functional connectivity better for interpreting ANNs.}
    \label{fig:mesulams_comparison}
\end{figure}

\textbf{The Unimodal Transmodal Axis In Neuroscience.}
Mesulam's work was then empirically validated by \cite{margulies2016situating}, demonstrating the structural and functional differences between regions involved in primary processing of input stimuli and regions involved in multi-modal information association. Rather than having discrete processing stages to map input to output, brain networks exhibit a hierarchy that is often conceptualized as a continuous gradient, or principle gradient (PG1) spanning from unimodal to transmodal regions \cite{margulies2016situating}. Unimodal regions are cortical areas primarily specialized for processing information from a single sensory modality and are typically characterized by strong SFC, meaning that their functional dynamics are closely constrained by underlying structural connectivity \cite{honey2010can}. Conversely, transmodal regions are cortical areas involved in integrating information across multiple modalities and supporting higher-order cognition. In these areas network dynamics are more flexible and less directly constrained by underlying structural connections \cite{margulies2016situating}. A visual overview of this concept can be seen in section A of Figure~\ref{fig:mesulams_comparison}, where input to the visual cortex (V1) is processed via V1-specific networks and input to the auditory cortex (A1) is processed separately via A1-specific networks. While processing is kept separate initially, eventually transmodal regions enable integration of information as part of higher order functional networks such as the default mode network. 

\textbf{Losing Structure Does Not Limit Functional Connectivity In Biological Networks}
Differences in structural and functional relationships across the cortex is what gives rise to the macroscale cortical hierarchy in the brain \cite{mesulam1998sensation,margulies2016situating}. Thus enabling the differences in information processing whilst remaining embedded in the constrained single structure that is the human brain. Yet even when the brain loses structural components, it is able to map sensation to action if functional connectivity is preserved. For example individuals with split-brain, a situation where the main bundle of nerve fibres connecting the left side of the brain with the right has been severed (usually to treat severe cases of epilepsy) \cite{uddin2008residual}, functional connectivity has still been observed on a task-dependent basis \cite{uddin2013complex}. Thus highlighting the notion that functional connectivity, while constrained, is not completely limited by structural connectivity. This inspires our consideration of artificial functional connectivity preservation to guide structured pruning, by removing connections that contribute little to the network’s functional dynamics; as preserving all structural links may not be necessary for the task. This provides a more interpretable and direct method of identifying a trainable sparse network through structured pruning. 

\textbf{Proposing Artificial Functional Connectivity In ANNs}
 ANNs historically draw on concepts from biological neural processing. In particular, convolutional neural networks (CNNs) derive their core architectural ideas from research on the visual cortex, where neurons exhibit localized receptive fields and hierarchical feature selectivity \cite{dapello2020simulating}. However, understanding what topological organization may occur in ANNs is still relatively underexplored \cite{songdechakraiwut2025functional,zhang2023functional}. In this work, our investigation into AFC is based on the perspective of understanding the types of intermediate processing between input and image classification. As shown in section B of Figure~\ref{fig:mesulams_comparison}, mapping input to output happens differently in CNNs as segregation happens in the output layer. So what adopting a neuroscience lens, contextualizing AFC for ANNs, could do for model compression using structured pruning is a question this work aims to contribute to. 
\section{Supplementary Diagrams For Background}
\begin{figure*}[ht] 
    \centering
    \includegraphics[width=\textwidth,keepaspectratio]{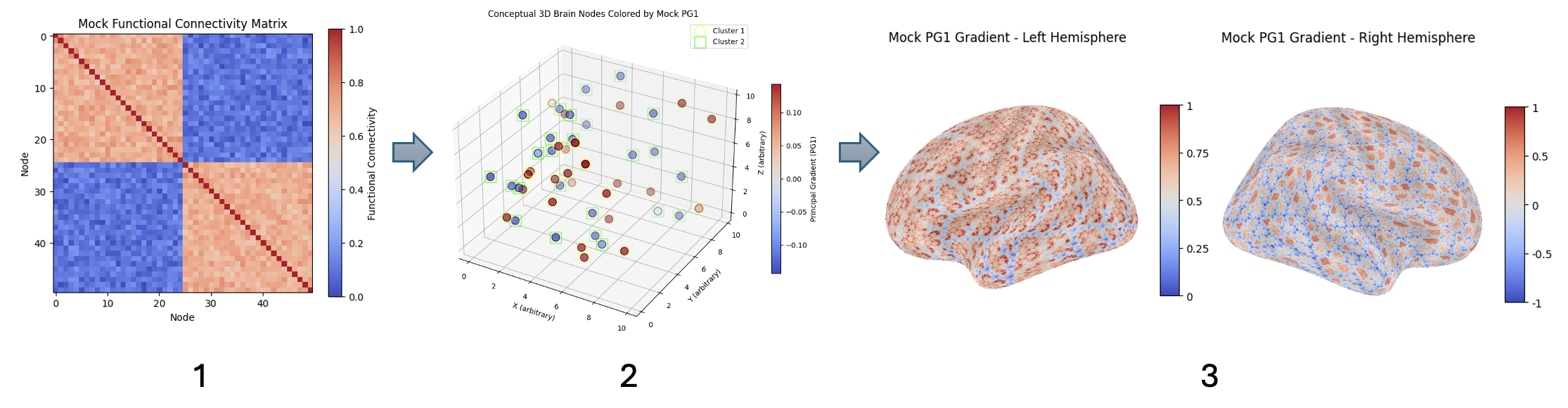}
    \caption{This figure shows a conceptual 3D “brain” (3) colored by their principal gradient (PG1) values computed from a simulated functional connectivity matrix (1). This provides a simplistic conceptual demonstration of how PG1 can reveal continuous gradients and cluster structure in a network. Section 3 shows a simulated principal gradient mapped onto the inflated cortical surfaces of the left and right hemispheres, with color indicating the gradient value across vertices, illustrating how a continuous functional gradient would appear spatially on the brain.}
    \label{fig:pg_ext_brain} 
\end{figure*}
\begin{figure*}[!h]
    \centering
    \centering
    \includegraphics[width=\linewidth]{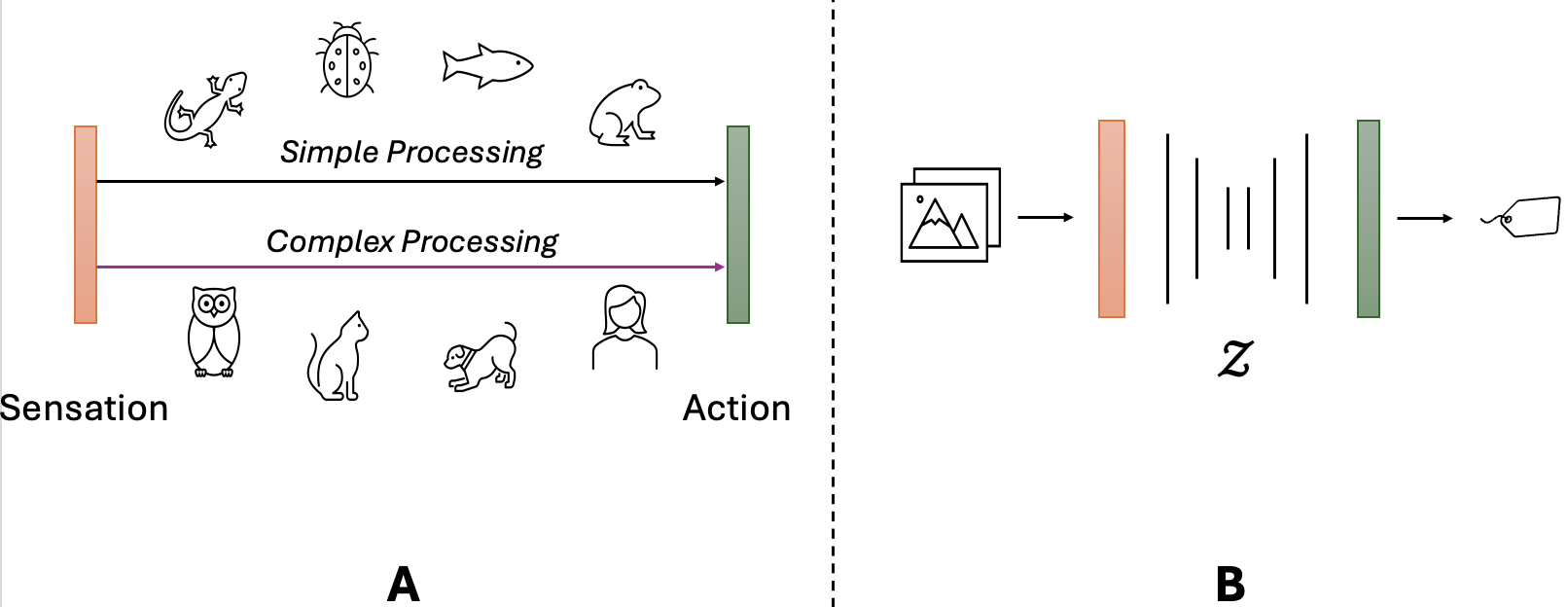}
    \caption{\textbf{A} shows Mesulam's articulation of the intermediate processing that facilitates the sensing of external stimuli to organisms taking action \cite{mesulam1998sensation}. Simple processing is typically done by less complex organisms that have autonomous responses to stimuli. Complex intermediate processing is what can be found in more complex organisms such as cats, dogs, birds and humans. \textbf{B} shows a similar interpretation we make, where \textit{Z} represents the feature representations learned by the network that enables the classifier head to map labels based on input \textit{"stimuli"}.}
    \label{fig:mesulam}
\end{figure*}
\subsection{Abbreviations and Terminology}
We include in Table~\ref{tab:abbreviations}, an overview of the key abbreviations used in this work and their definitions and meanings for clarity. 
\begin{table*}[!h]
\caption{Table of Abbreviations Used In This Work}
\label{tab:abbreviations}
\centering
\small
\setlength{\tabcolsep}{6pt}
\begin{tabular}{l p{0.85\textwidth}}
\toprule
\textbf{Abbreviation} & \textbf{Definition \& Meaning} \\
\midrule
ASF-S & \textit{Artificial Structure Function Search} --- Our Pruning Framework. \\[6pt]
AFC   & \textit{Artificial Functional Connectivity} --- The structure-function relationship that emerges in a trained neural network. \\[6pt]
PG1   & \textit{Principal Gradient} --- The first non-trivial eigenvector extracted from the output layer via diffusion embedding, representing the dominant axis of variation in the layer's activation space. \\[6pt]
PGI   & \textit{Principal Gradient Importance} --- A novel metric quantifying how each hidden unit in an ANN contributes to the PG1 contrast vector, computed as the difference between the output-layer PG1 for correct minus incorrect predictions. \\
\bottomrule
\end{tabular}
\end{table*}

\newpage
\section{Model Training and Evaluation}
Here we include details of the procedures and hyperparameters utilized to obtain the pre-trained model checkpoints used for our experiments. All models were implemented in \texttt{PyTorch} and trained on GPU acceleration where available. The training utilized the standard training splits of the MNIST and CIFAR-10 datasets. All images were normalized using dataset-specific mean ($\mu$) and standard deviation ($\sigma$) values. For MNIST, images were normalized with $\mu = 0.1307$ and $\sigma = 0.3081$. Training and validation data were loaded using a batch size of 60. For CIFAR-10, images were normalized using $\mu = [0.4914, 0.4822, 0.4465]$ and $\sigma = [0.2023, 0.1994, 0.2010]$. Training data utilized significant augmentation to enhance robustness and generalization, including random horizontal flip, random crop ($32 \times 32$ with 4-pixel padding), color jitter (brightness, contrast, saturation $0.2$; hue $0.1$), and random rotation ($15^{\circ}$). Both training and validation data were processed using a batch size of 128.

The following network architectures, as shown in Table~\ref{tab:training_params}, were trained to convergence. Training employed the Adam optimizer and utilized a ReduceLROnPlateau scheduler to dynamically adjust the learning rate based on validation loss, with a patience of 10 epochs and a reduction factor of 0.1.

\begin{table}[!h]
\centering
\captionsetup{justification=centering}
\caption{Training hyperparameters for all models.}
\label{tab:training_params}
\begin{tabular}{lcccccc}
\toprule
\textbf{Model} & \textbf{Dataset} & \textbf{Epochs (Max)} & \textbf{Initial LR} & \textbf{Weight Decay} & \textbf{Batch Size} \\
\midrule
LeNet300\_100 & MNIST & 50  & $1.2 \times 10^{-3}$ & $1 \times 10^{-4}$ & 60  \\
Conv2Net      & CIFAR-10 & 100 & $2 \times 10^{-4}$ & $5 \times 10^{-4}$ & 128 \\
Conv6Net      & CIFAR-10 & 100 & $3 \times 10^{-4}$ & $5 \times 10^{-4}$ & 128 \\
\bottomrule
\end{tabular}
\end{table}
The model  corresponding to the lowest validation loss achieved was saved as the final best-performing checkpoint for subsequent activation extraction. 

We evaluated the performance of our trained neural network models on their respective test datasets. For the CIFAR-10 models, \texttt{Conv2Net} and \texttt{Conv6Net}, we used the full CIFAR-10 test set consisting of 10,000 images. For the MNIST model, \texttt{LeNet300\_100}, we used the standard MNIST test set of 10,000 grayscale images.

Table~\ref{tab:model_eval} summarizes the accuracy, parameters and FLOPS for each model.

\begin{table}[!h]
\centering
\captionsetup{justification=centering}
\caption{Evaluation results for all trained models on their respective test datasets.}
\label{tab:model_eval}
\begin{tabular}{lccc}
\toprule
\textbf{Model Name} & \textbf{Accuracy (\%)} & \textbf{Params (K)} & \textbf{FLOPs (M)} \\
\midrule
Conv2Net      & 72.69 & 183 & 40.89 \\
Conv6Net     & 88.10 & 815 & 79.47 \\
LeNet300\_100    & 98.51 & 267 & 0.53 \\
\bottomrule
\end{tabular}
\end{table}

 \section{Proofs}\label{app:proofs}
 \begin{proposition}
     For $x$ with label $y$, let $\gamma(x) = f_y(z(x)) - \max_{i\ne y}f_i(z(x))$ be the classification margin of the unpruned network, $B = \max_i\|w_i^{(f)}\|$, and $\varepsilon(\tau) = \|z(x)-\tilde z(x)\|$ the representation shift induced by pruning at threshold $\tau$. If $\varepsilon(\tau) < \gamma(x)/(2B)$, the unmodified output layer classifies $x$ correctly on $\tilde z(x)$.
 \end{proposition}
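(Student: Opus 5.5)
The plan is a standard margin-perturbation argument using only the affine form of the output layer, $f_i(z)=\langle w_i^{(f)},z\rangle+b_i^{(f)}$, and Cauchy--Schwarz. Nothing specific to PGI is needed: the pruning threshold $\tau$ enters only through $\varepsilon(\tau)$. We may assume $\gamma(x)>0$, since otherwise the hypothesis $\varepsilon(\tau)<\gamma(x)/(2B)$ cannot hold ($\varepsilon\ge 0$). We may also assume $B>0$, since the hypothesis presupposes it.

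\textbf{Step 1: each logit moves by at most $B\varepsilon(\tau)$.} The bias cancels, so for every output unit $i$,
\begin{equation*}
|f_i(\tilde z(x)) - f_i(z(x))| = |\langle w_i^{(f)}, \tilde z(x)-z(x)\rangle| \le \|w_i^{(f)}\|\,\|\tilde z(x)-z(x)\| \le B\,\varepsilon(\tau).
\end{equation*}
Here $\|\cdot\|$ is the Euclidean norm in both $B$ and $\varepsilon$.

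\textbf{Step 2: compare the true logit with each competitor.} Fix any $j\ne y$. Step 1 gives a lower bound on the true logit and an upper bound on the competitor:
\begin{equation*}
f_y(\tilde z(x)) - f_j(\tilde z(x)) \ge \big(f_y(z(x)) - B\varepsilon(\tau)\big) - \big(f_j(z(x)) + B\varepsilon(\tau)\big) \ge \gamma(x) - 2B\varepsilon(\tau).
\end{equation*}
The last inequality uses $f_j(z(x))\le \max_{i\ne y} f_i(z(x))$. The hypothesis $\varepsilon(\tau)<\gamma(x)/(2B)$ makes the right-hand side strictly positive. Hence $f_y(\tilde z(x))>f_j(\tilde z(x))$ for all $j\ne y$, so $\arg\max_i f_i(\tilde z(x))=y$ uniquely and the unmodified output layer classifies $x$ correctly.

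\textbf{Main obstacle.} The only delicate point is bookkeeping rather than mathematics. The output weights $w_i^{(f)}$ must be the same unmodified ones in both evaluations, and $z$ and $\tilde z$ must live in the same space. In the implementation the pruned layers are rebuilt with fewer units, so I would interpret $\tilde z(x)$ as the masked feature $M^{(L-1)}$-applied representation, embedded back in the original dimension with zeros in the pruned coordinates. This is consistent with the diagonal-mask description of PGI pruning. With that convention the output layer acts on $\tilde z(x)$ unchanged and the argument goes through verbatim. The factor $2$ cannot be improved in general, since the true logit and the strongest competitor can each move by $B\varepsilon$ in opposite directions.
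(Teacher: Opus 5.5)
Your proof is correct and matches the paper's argument: the affine output layer is $B$-Lipschitz, so each logit moves by at most $B\varepsilon(\tau)$, and therefore $f_y(\tilde z(x)) - \max_{i\ne y} f_i(\tilde z(x)) \ge \gamma(x) - 2B\varepsilon(\tau) > 0$. Your extra bookkeeping is sound and fills in details the paper leaves implicit: the reductions to $\gamma(x)>0$ and $B>0$, and treating $\tilde z(x)$ as the diagonally masked, zero-padded feature in the original space.
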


\begin{proof}
Each output transformation $f_i$ is affine and $B$-Lipschitz in $z$, so $|f_i(z(x))-f_i(\tilde z(x))|\le B\varepsilon(\tau)$ for all $i$. Hence $f_y(\tilde z(x)) - \max_{i\ne y}f_i(\tilde z(x)) \ge \gamma(x) - 2B\varepsilon(\tau)$, which is positive under the stated condition.
\end{proof}
\begin{assumption}\label{assump:2}
Let $u_k$ be the $k$-th PCA component of the contrast matrix $\Delta G$ (Eq.\ref{eq:cont_matrix},\ref{eq:pca}) and $\bar
w^{(f)}$ the mean output-layer weight vector. A unit $i$ in layer $l$ is \emph{task-relevant} if
there exists $\kappa_i > 0$ such that
\begin{equation}
    \big\langle w_i^{(l)},\, u_k \odot \bar w^{(f)} - \bar w^{(f)} \big\rangle \;\ge\; \kappa_i.
\end{equation}
\end{assumption}
\begin{proposition}
    Under Assumption \ref{assump:1}, PGI-based scoring assigns greater importance to units that are structurally aligned with the network's task-discriminative behavior. Formally, for every task-relevant unit $i$ in layer $l$, 
\begin{equation}
   s_{i,k}^{(l)} = \big\langle w_i^{(l)}, u_k\odot\bar w^{(f)}\big\rangle \ge \big\langle w_i^{(l)}, \bar w^{(f)}\big\rangle = p_{i,k}^{(l)}.
\end{equation}
\end{proposition}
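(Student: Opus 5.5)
The argument is a direct consequence of bilinearity of the inner product applied to the defining inequality of Assumption \ref{assump:1}.

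First, fix a task-relevant unit $i$ in layer $l$ and a component index $k$. Write the difference of the two quantities in the statement as a single inner product:
\begin{equation*}
s_{i,k}^{(l)} - p_{i,k}^{(l)} = \big\langle w_i^{(l)}, u_k\odot\bar w^{(f)}\big\rangle - \big\langle w_i^{(l)}, \bar w^{(f)}\big\rangle = \big\langle w_i^{(l)}, u_k\odot\bar w^{(f)} - \bar w^{(f)}\big\rangle .
\end{equation*}
This uses only linearity of $\langle w_i^{(l)},\cdot\rangle$ in its second argument.

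Second, invoke Assumption \ref{assump:1}. It states that there is $\kappa_i>0$ with $\langle w_i^{(l)}, u_k\odot\bar w^{(f)}-\bar w^{(f)}\rangle\ge\kappa_i$. Hence $s_{i,k}^{(l)}-p_{i,k}^{(l)}\ge\kappa_i>0$, which gives $s_{i,k}^{(l)}\ge p_{i,k}^{(l)}$. In fact this yields the strict inequality with gap at least $\kappa_i$, which I would state as a slightly stronger conclusion.

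The only real obstacle is interpretive rather than technical. The inner products must be well defined, so $w_i^{(l)}$, $u_k$ and $\bar w^{(f)}$ must live in a common dimension; I would state this as an implicit convention, with flattening or reshaping to the output-layer input dimension as in the PGI construction. I would also make explicit that the assumption is read as holding for the same $k$ (or for all $k=1,\dots,K$), so the conclusion holds for those $k$.

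If one then wants a statement about the aggregated score, apply the same bound for each $k$. When the $p_{i,k}^{(l)}$ are nonnegative, $|s_{i,k}^{(l)}|\ge s_{i,k}^{(l)}\ge p_{i,k}^{(l)}$, and averaging over $k$ shows that the pre-normalization PGI of a task-relevant unit dominates its unweighted projection score. Min--max normalization is monotone within a layer, so this ordering is preserved.
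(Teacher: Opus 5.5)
Your argument is correct and is the paper's proof: linearity of the inner product gives $s_{i,k}^{(l)}-p_{i,k}^{(l)}=\langle w_i^{(l)}, u_k\odot\bar w^{(f)}-\bar w^{(f)}\rangle$, and Assumption~\ref{assump:1} bounds this below by $\kappa_i>0$. Drop the min--max step of your optional extension: normalization is monotone only across units for a single score, so it does not preserve the per-unit comparison between the normalized PGI and $p_{i,k}^{(l)}$; also, $|s_{i,k}^{(l)}|\ge s_{i,k}^{(l)}\ge p_{i,k}^{(l)}$ holds with no sign condition on $p_{i,k}^{(l)}$.
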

\begin{proof} 
By definition, $s_{i,k}^{(l)} = \langle w_i^{(l)}, u_k \odot \bar w^{(f)} \rangle$
and $p_{i,k}^{(l)} = \langle w_i^{(l)}, \bar w^{(f)} \rangle$. By linearity of the inner product,
\[
s_{i,k}^{(l)} - p_{i,k}^{(l)} = \big\langle w_i^{(l)},\, u_k \odot \bar w^{(f)} - \bar w^{(f)}
\big\rangle.
\]
For task-relevant units, this is at least $\kappa_i > 0$ by Assumption~1, hence $s_{i,k}^{(l)} \ge
p_{i,k}^{(l)}$. 
\end{proof}

\section{Additional Results \& Discussion}
\subsection{Principle Gradient Importance}
Here we include the normalized PGI scores extracted per layer for each model. 
As shown in Figure~\ref{fig:lenet_pgi}, each neurone has its own level of importance for contributing to the preservation of organization in the output layer when classification of a MNIST digit is done correctly. There are many neurons that do not exceed an PGI score of 0.2, and pruning reduces up to 62\% of parameters from this low threshold. Even thresholds up to 0.8 which led to 97\% reductions in parameters, maintained accuracy within 4\% of the original. For CNNs with multiple layers to prune (as in Figure~\ref{fig:conv2net_pgi} and \ref{fig:conv6net_pgi}), we are able to get  PGI scores for each layer using the PCA components from the PG1 contrast vector. This is important as we observe the extraction of multiple pruned model variants for Conv2Net and Conv6Net, where we are able to reach the same sparsity levels but with some models yielding better accuracies (within 2\% of base model accuracy as opposed to within 5\%). The only difference in method of yielding these variants, was the use of different thresholds combinations during ASF-S.  

\begin{figure*}[!h] 
    \centering
    \includegraphics[width=0.8\textwidth,keepaspectratio]{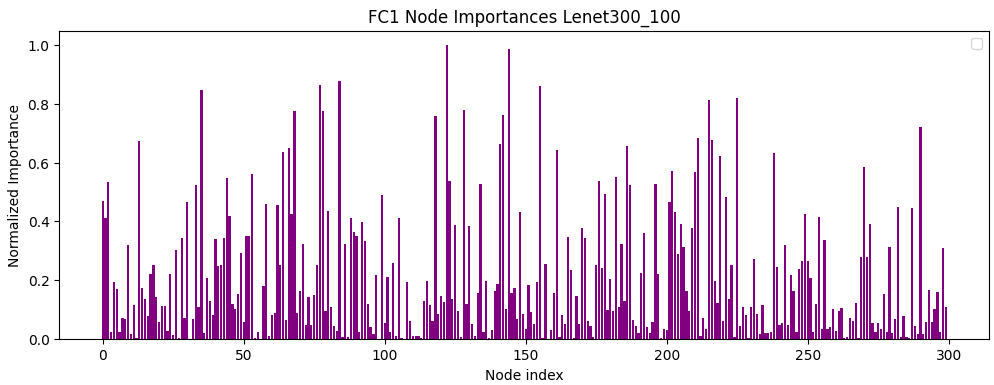}
    \caption{Normalized neuron importance for FC1 for LeNet. Filters with lower PGI scores served as candidates for pruning.}
    \label{fig:lenet_pgi}
\end{figure*}
\begin{figure*}[!h] 
    \centering
    \includegraphics[width=0.8\textwidth,keepaspectratio]{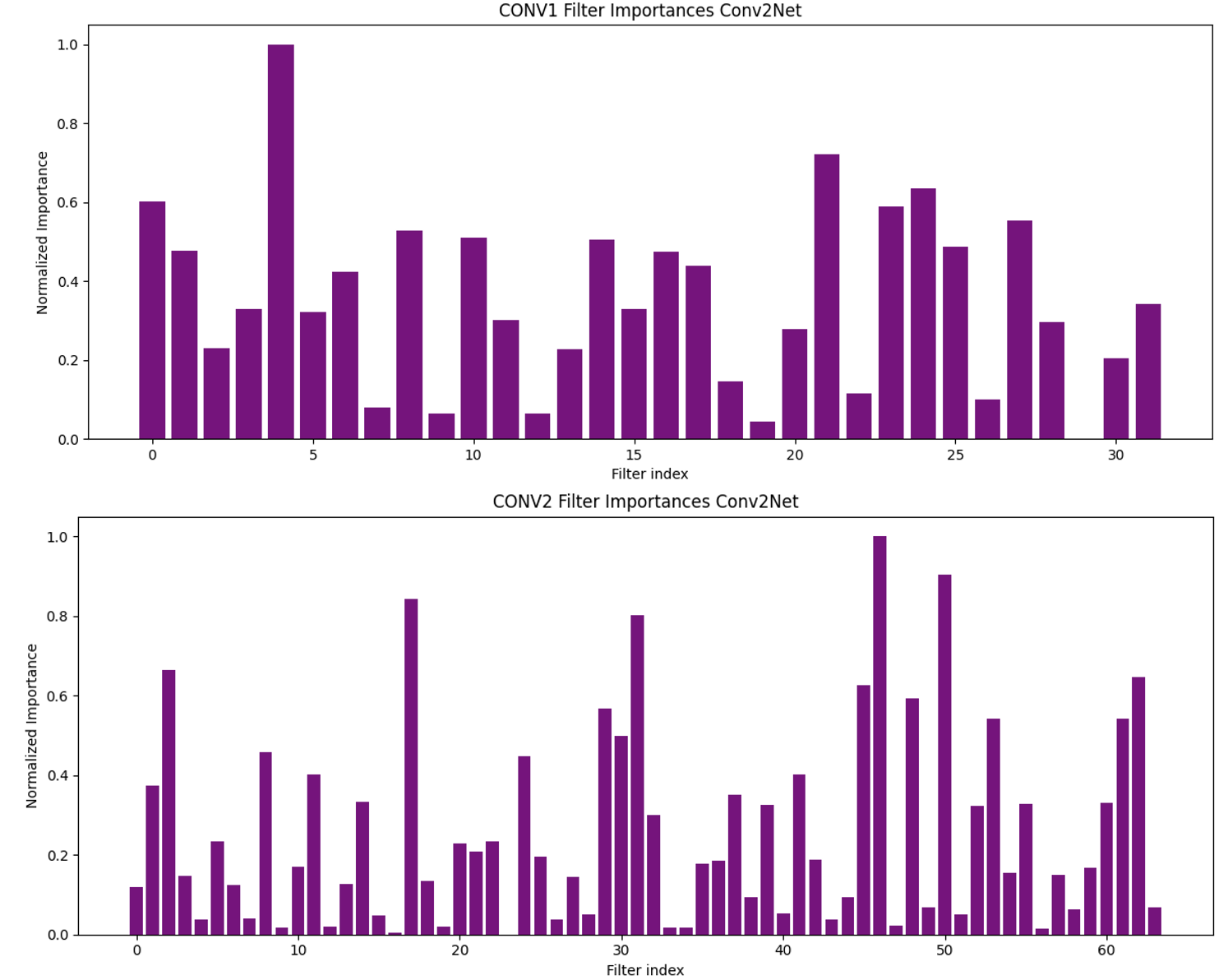}
    \caption{Normalized Filter importance for Conv1 and Conv2 for Conv2Net. Filters with lower PGI scores served as candidates for pruning.}
    \label{fig:conv2net_pgi}
\end{figure*}
\begin{figure*}[!h] 
    \centering
    \includegraphics[width=0.8\textwidth,keepaspectratio]{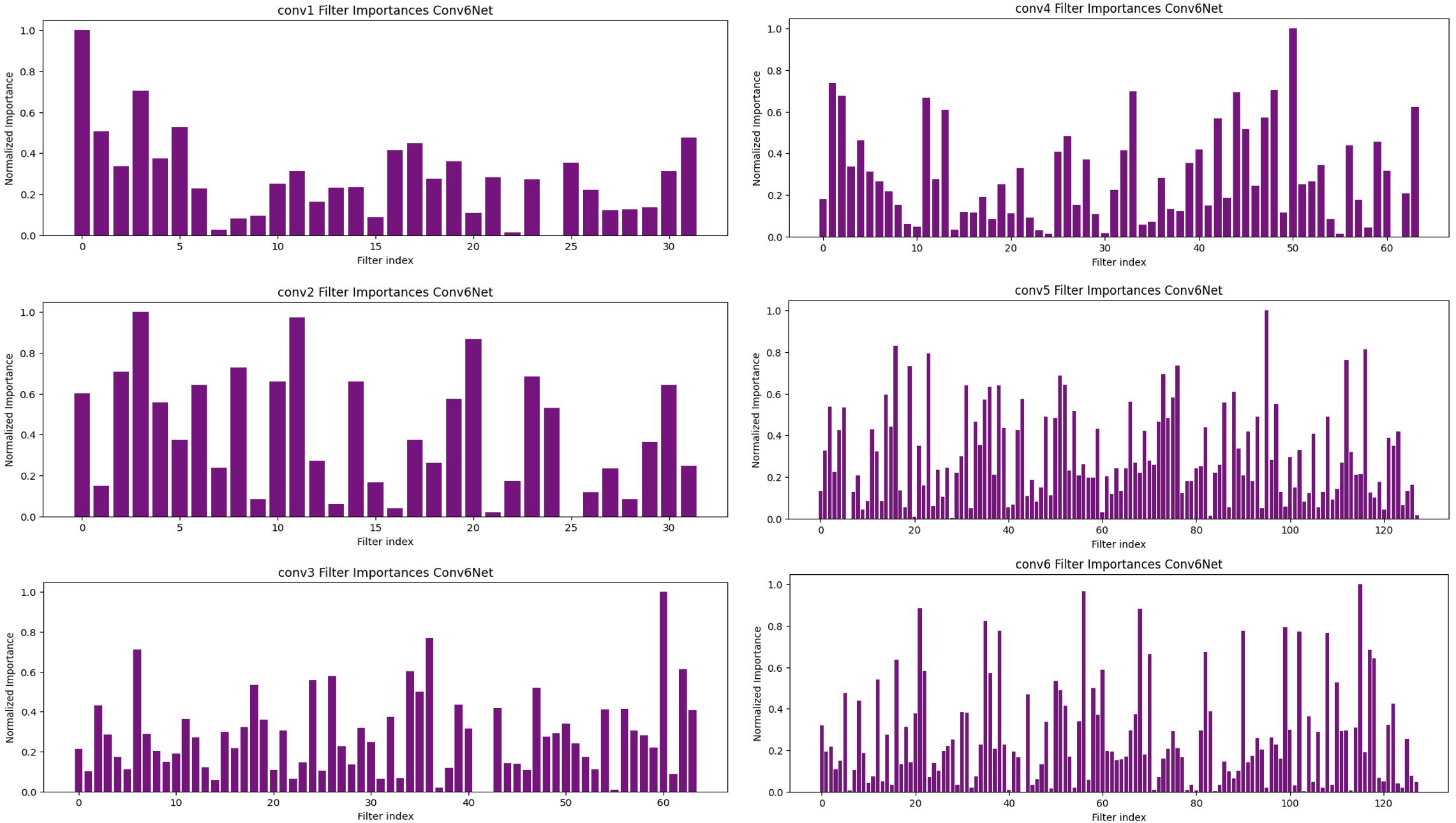}
    \caption{Normalized Filter importance for Conv1...Conv6 for Conv6Net. Filters with lower PGI scores served as candidates for pruning.}
    \label{fig:conv6net_pgi}
\end{figure*}

\begin{table*}[h]
\centering
\caption{Top FLOPs reduction per pruned model variant $\times$ Method (ASF-S + other pruning techniques)}
\label{tab:top_FLOPs_clean}
\begin{tabular}{llrrr} 
\hline
\textbf{Pruned Model Variant} & \textbf{Method} & \textbf{$\Delta$Acc} & \textbf{Params Reduction (\%)} & \textbf{FLOPs Reduction (\%)} \\
\hline
conv2 & topk   & -16.12 & 91.5 & 98.5 \\
conv2 & random & -10.64 & 87.0 & 96.3 \\
conv2 & L1 & -12.99&90.0&97.8\\
conv2 & L2 & -13.74&90.00&97.8\\
conv2 & \textbf{ASF-S} & -\textbf{3.78}  & 75.8 & \textbf{82.7} \\
conv6 & random & -28.55 & 95.3 & 99.0 \\
conv6 & topk   & -25.11 & 93.3 & 98.7 \\
conv6 & L1 & -24.26&92.7&98.2\\
conv6 & L2 & -21.05&92.7&98.2\\
conv6 & \textbf{ASF-S} & \textbf{-3.97 } & 67.0 & \textbf{75.5} \\
lenet & top-k    & -2.08  & 90.0 & 89.9 \\
lenet & l2     & -1.80  & 89.6 & 89.6 \\
lenet & \textbf{ASF-S }& \textbf{-2.89 } & 94.9 & \textbf{94.9} \\
\hline 
\end{tabular}
\end{table*}
\newpage
\subsection{ASF-S Preserves Accuracy Whilst Reducing FLOPs}
We applied ASF-S guided by PGI scores derived from our PG1 multiple times for different threshold combinations for layers of \textit{Conv2Net}, \textit{Conv6Net} and examined a range of thresholds from 0.1 to 0.9 for \textit{LeNet300\_100}. For convolutional layers, PGI scores quantify each filter's contribution to class-wise correct versus incorrect activation patterns, while for fully connected layers, neurons are scored instead. Filters or neurons with PGI below layer-specific thresholds were pruned in a single step, followed by fine-tuning either on fully connected layers alone or the full network. We were able to demonstrate that our method achieves much better preservation of accuracy than the other methods we experimented with namely  $\ell_{1}$-norm pruning, $\ell_{2}$-norm pruning, top-$K$ pruning, and random pruning. ASF-S yielded only between 3-4\% drops in accuracy, whilst enabling up to 83\% reduction in FLOPs in CNNs and 95\% reduction in FLOPs of MLPs. This suggests ASF-S can identify a sparse networks\cite{frankle2019lottery,frantar2023sparsegpt} using structured pruning for model compression with preservation of accuracy. 



 \newpage
\subsection{Functional Connectivity Maps}
Here we discuss our results for extracting the functional connectivity maps from the models investigated in this work. In line with works in RSA and functional networks \cite{kornblith2019similarity,kriegeskorte2008representational,zhang2023functional}, we found, upon visual inspection, differences between correct and incorrect classification of all classes. Our results can be seen in Figure~\ref{fig:fc_maps_conv}. The topological organization can be inferred from what we see demonstrated in these AFC maps. Despite the same dataset and training regimes being used we can see that different architectures learn different topographies in the output layer for each class. This difference across classes is also shown in the contrast vectors shown in Figure~\ref{fig:contrast_vector_conv2_1}

Additionally, when the model incorrectly classifies, we see that the topographical organization is also different, with certain units exhibiting differences in correlated activity. Another interesting finding is the patterns that emerge for living organisms versus non-living objects. In A1 for living organisms (2,3,4,5,6,7) a small square can be seen in the top right hand corner indicating similar co-activation of units across samples within that class. This square is not present in non-living objects. Similar visual similarities for the discrepancy between living and non-living can be seen in row B1 also. This highlights the that AFC between units that drives the final interpretation of the learned representations by the hidden layers in CNNs, is useful for discriminating between classes as well as correct and incorrect classification.

\begin{figure}[!h]
    \centering
    \includegraphics[width=\linewidth]{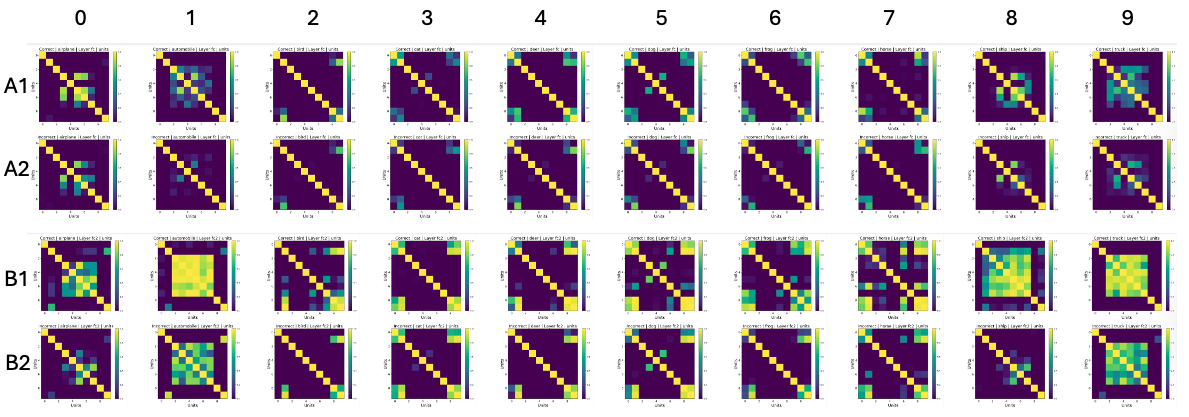}
    \caption{This shows the artificial functional connectivity maps for Conv2Net: Row A1 shows the maps from when the model correctly classified, columns 1-9 show the classes airplane, automobile, bird, cat, deer, dog, frog, horse, ship and truck respectively. Row A2 shows the same but for incorrect classification. Row B1 shows the maps for Conv6Net when classification was done correctly, row B2 shows same but for incorrect classification.} 
    \label{fig:fc_maps_conv}
\end{figure}

\begin{figure}[!h]
    \centering
    \includegraphics[width=\linewidth]{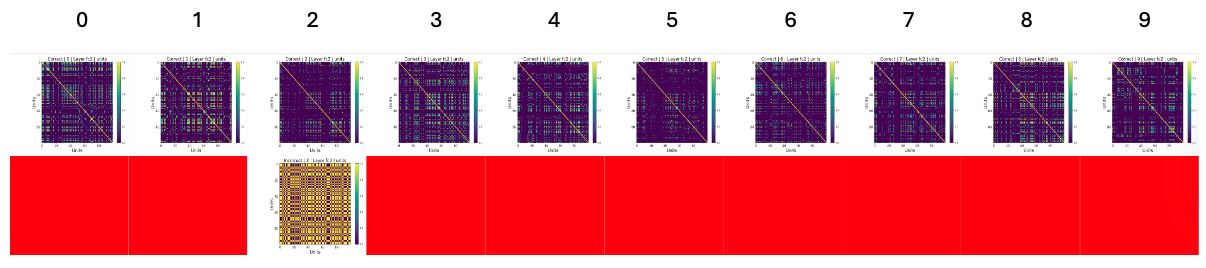}
    \caption{Shows the artificial functional connectivity maps for LeNet, the column headings correspond to the digit in the MNIST dataset. The top row shows the maps for correct classification and the bottom row shows the map for incorrect classification.}
    \label{fig:fc_maps_lenet}
\end{figure}
\newpage
\subsection{Results For Principle Gradient Vector Extraction}
Here we provide a discussion around our interpretation of the extracted PG1 per layer. In neuroscience PG1 demonstrates a single axis to differentiate between clusters of regions that may be part of unimodal processing or transmodal processing. But in our work we are able to provide some preliminary analysis about class discrimination and incorrect versus correct conditions. 

\begin{figure}[!h]
    \centering
    \includegraphics[width=\linewidth]{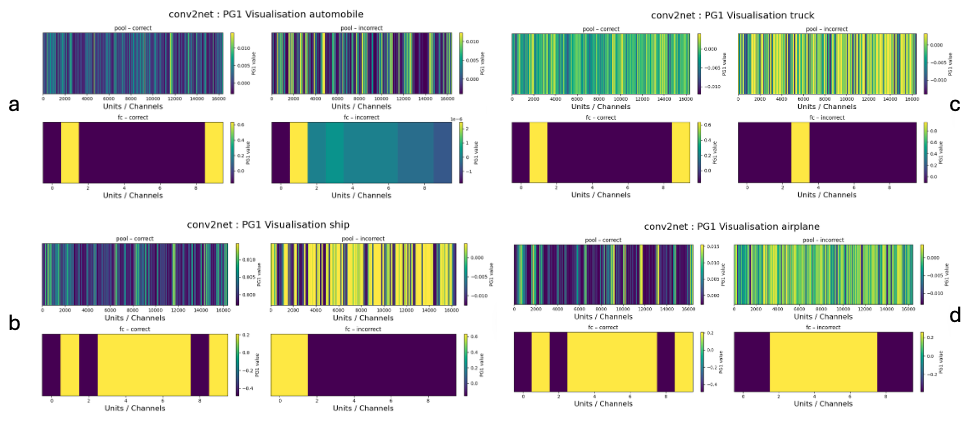}
    \caption{Here we include the PG1 vectors extracted from Conv2Net for CIFAR-10 classes that we deem semantically similar as they are all non-living objects. a shows results for the automobile class: top left shows the PG1 for the maxpool layer, bottom left shows the PG1 for the output layer when classification was correct, the right side shows the same but for the incorrect condition. b: shows results for the results for the ship class, top left shows the PG1 for the maxpool layer, bottom left shows the PG1 for the output layer when classification was correct, the right side shows the same but for the incorrect condition. c: shows the results for the truck, top left shows the PG1 for the maxpool layer, bottom left shows the PG1 for the output layer when classification was correct, the right side shows the same but for the incorrect condition. d: shows results for the airplane class, top left shows the PG1 for the maxpool layer, bottom left shows the PG1 for the output layer when classification was correct, the right side shows the same but for the incorrect condition.}
    \label{fig:pg_conv2_1}
\end{figure}

Visualization of the PG1 values across the output layer reveals that each class has distinct topographical structures, with unique clustering of neuroes driving correct classification. Some classes even exhibit exclusively positive values in the PG1 vector, while others span negative to slightly positive values. This suggests that AFC exhibited by a single network can be diverse across classes as well as condition (correct versus incorrect). In the pooling layers, PG1 shows a broader range of values and less polarity, suggesting a richer and more distributed topographical organization. In neuroscience, PG1 led to evidence for the existence of large scale functional networks \cite{mesulam1998sensation,margulies2016situating,Yeo2011_7Networks} that make up the intermediate processing mapping sensation to action. Our results complement the notion that similar large scale networks, making up AFC, could exist in ANNs as well \cite{zhang2023functional,songdechakraiwut2025functional}.  Our results can be seen in Figures~\ref{fig:pg_conv2_1}, \ref{fig:pg_conv2_2},\ref{fig:pg_conv6_1}, \ref{fig:pg_conv6_2} and \ref{fig:pg_lenet} for Conv2Net, Conv6Net and LeNet respectively.

\begin{figure}[!h]
    \centering
    \includegraphics[width=\linewidth]{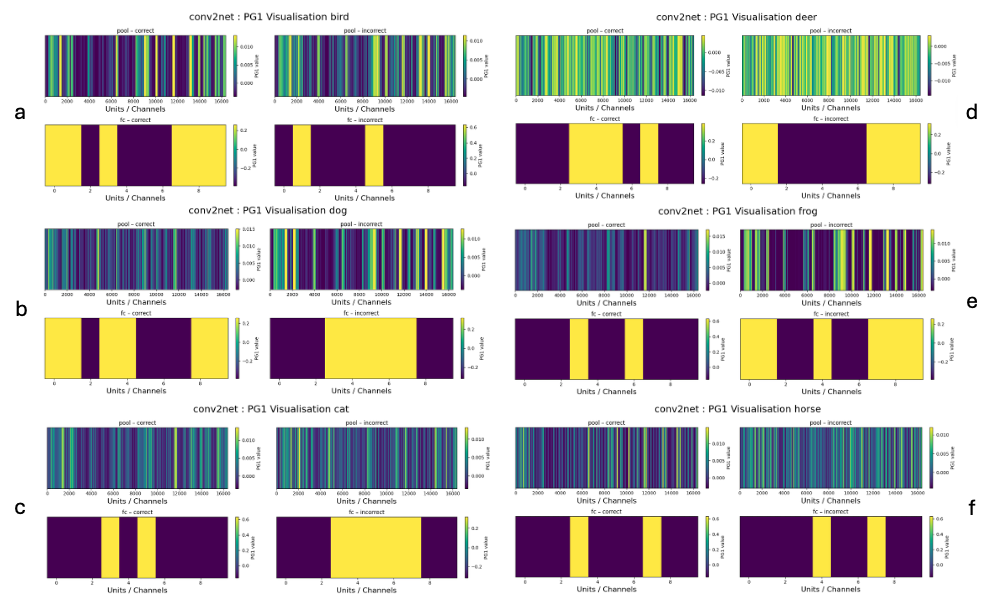}
    \caption{Here we include the PG1 vectors extracted from Conv2Net for CIFAR-10 classes that we deem semantically similar as they are all living organisms. a: shows results for the bird class,  top left shows the PG1 for the maxpool layer, bottom left shows the PG1 for the output layer when classification was correct, the right side shows the same but for the incorrect condition. b: shows results for the dog class,  top left shows the PG1 for the maxpool layer, bottom left shows the PG1 for the output layer when classification was correct, the right side shows the same but for the incorrect condition. c: shows results for the cat class,  top left shows the PG1 for the maxpool layer, bottom left shows the PG1 for the output layer when classification was correct, the right side shows the same but for the incorrect condition. d: shows results for the deer class,  top left shows the PG1 for the maxpool layer, bottom left shows the PG1 for the output layer when classification was correct, the right side shows the same but for the incorrect condition. e: shows results for the frog class,  top left shows the PG1 for the maxpool layer, bottom left shows the PG1 for the output layer when classification was correct, the right side shows the same but for the incorrect condition. f: shows results for the horse class, top left shows the PG1 for the maxpool layer, bottom left shows the PG1 for the output layer when classification was correct, the right side shows the same but for the incorrect condition.}
    \label{fig:pg_conv2_2}
\end{figure}
In interpreting PG1 across network layers, it is also important to distinguish between feature-rich hidden layers and the label-specific output layer. In hidden layers, units with similar PG1 values could be contributing to similar computations, analogous to the unimodal-transmodal axis in the brain. In contrast, in the output layer, units are directly aligned with class decisions, such that PG1 primarily reflects the dominant decision axis rather than distributed feature structure. Consequently, while PG1 could reveal functional networks in hidden layers, its interpretation in the output layer is limited to class-specific utilization of a shared decision axis. Therefore using PG1 of the output layer to guide pruning enables us to identify which units in the hidden layers are key for conducting the relevant computations needed for the task the ANN was trained on. This is the basis on which we created out PGI scoring system. 
\begin{figure}[!h]
    \centering
    \includegraphics[width=\linewidth]{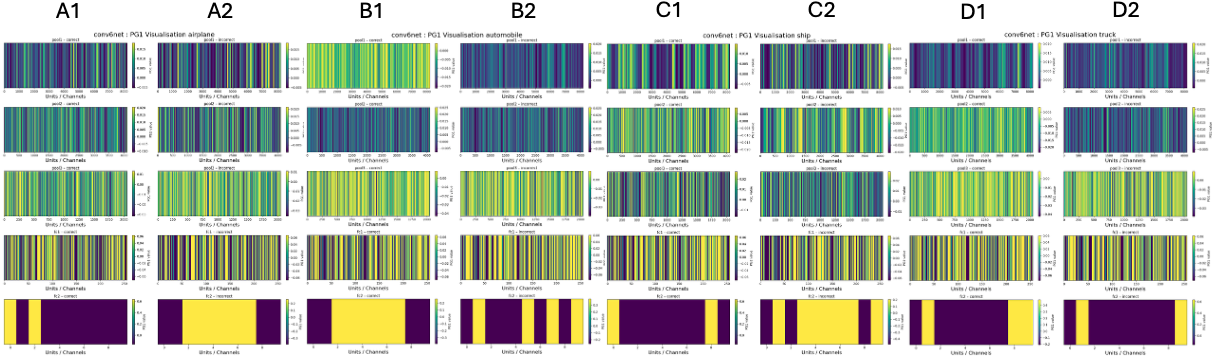}
    \caption{Here we include the PG1 vectors extracted from Conv6Net for CIFAR-10 classes that we deem semantically similar as they are all non-living objects. A1 shows the correct classification of the airplane, A2 shows incorrect. B1 shows correct classification of the automobile, B2 shows incorrect. C1 shows correct classifcation of the ship, C2 shows incorrect. D1 shows correct classification of the truck, D2 shows incorrect.}
    \label{fig:pg_conv6_1}
\end{figure}

\begin{figure}[!h]
    \centering
    \includegraphics[width=\linewidth]{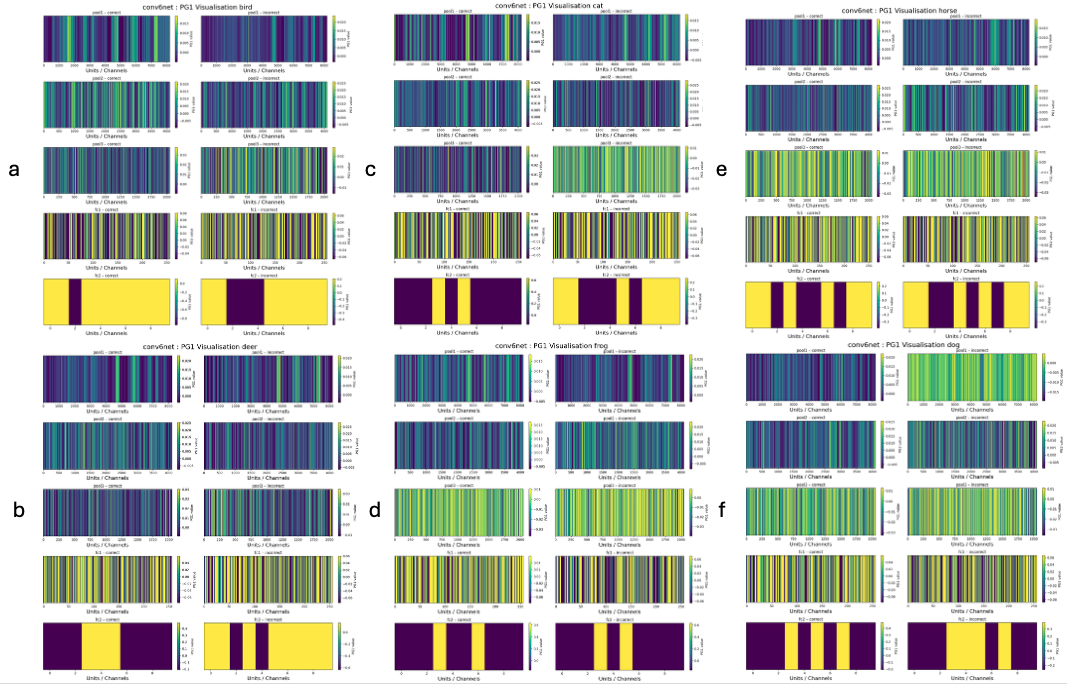}
    \caption{Here we include the PG1 vectors extracted from Conv6Net for CIFAR-10 classes that we deem semantically similar as they are living organisms. From a to f, we show results for(a) bird, (b) deer, (c) cat, (d) frog, (e) horse and (f) dog. For all classes the left-hand column for each image represents the PG1 for the correct vector, the right-hand side is incorrect cases.}
    \label{fig:pg_conv6_2}
\end{figure}

\begin{figure}[!h]
    \centering
    \includegraphics[width=\linewidth]{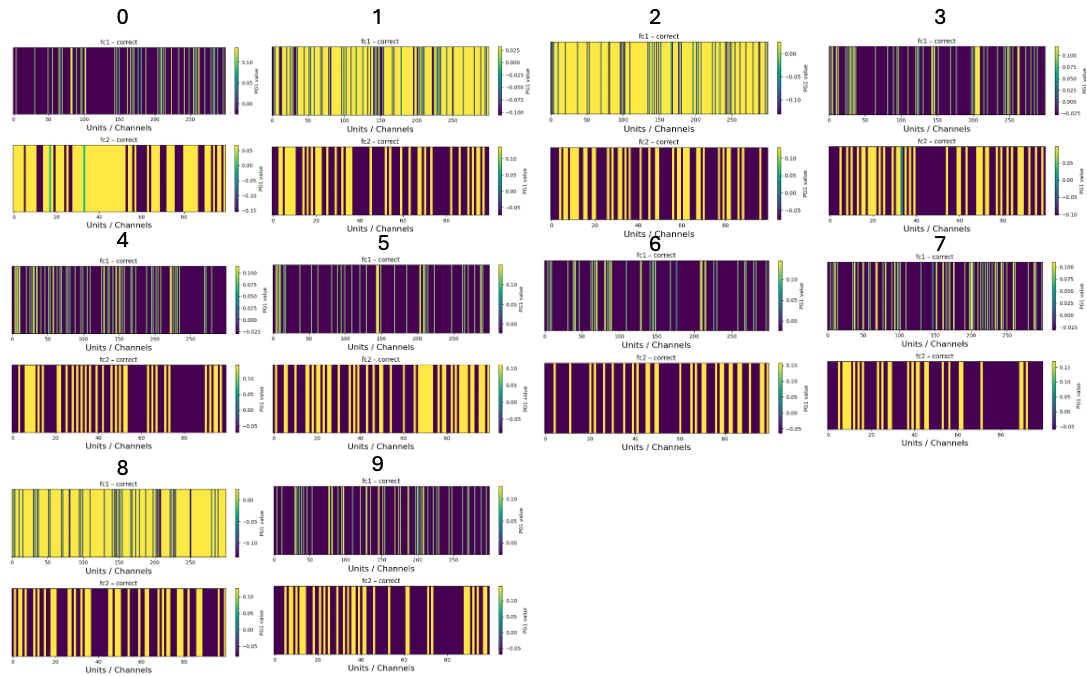}
    \caption{Here we include the PG1 vectors extracted from Lenet for MNIST digits 0 to 9. All results here demonstrate correct classification, the top image for each class is for the first FC layer and the bottom image corresponds to the second FC layer.}
    \label{fig:pg_lenet}
\end{figure}

\clearpage
\subsection{Contrast Vector Extraction}
Due to differences in PG1 vectors across classes, we wanted to ensure that our PGI importance scores were calculated in a way that ensured preservation of units that contribute to retaining all classes. The condition to satisfy our pruning was simply the difference between incorrect and correct classification, therefore our PGI extraction logic included using all contrast vectors across classes and using PCA to extract the top 4 components that accounted for $\sim$90\% of the variation. Such is what returned the PGI scores that we used to prune layers, to ensure that pruning was done in a way that identified the artificial default mode network that could preserve the best learned features for all classes. We demonstrate in Figure~\ref{fig:contrast_vector_conv2_1}, \ref{fig:contrast_vector_conv2_2}, \ref{fig:contrast_vector_conv6_1}, and \ref{fig:contrast_vector_conv6_2} our normalized contrast vectors for the output layer of Conv2Net, Conv6Net and LeNet respetively. If we interpret the PG1 contrast as a signal for determining behaviour of the output layer, we can see that it is unique to each class. This poses the potential to guide pruning regimes that could be class-specific by modifying the contrast vector logic that is used in ASF-S. That is to find the minimally viable network needed for a network to successfully classify a given class. 

\begin{figure}[!h]
    \centering
    \includegraphics[width=\linewidth]{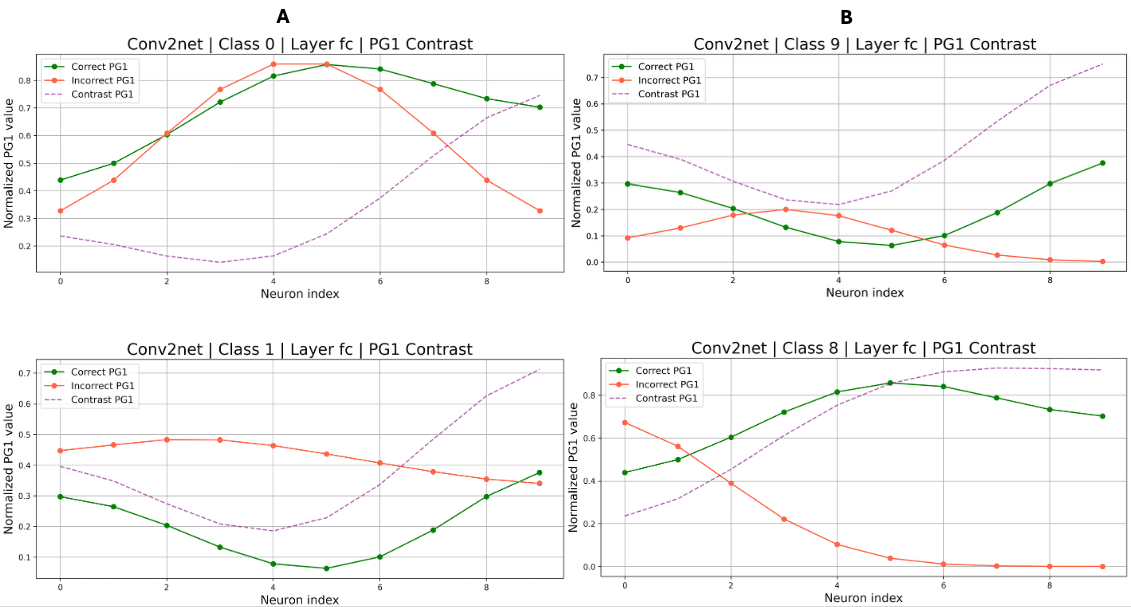}
    \caption{Contrast vectors extracted from Conv2Net for "non-organic" classes of airplane (0), automobile (1), ship (8) and truck (9).}
    \label{fig:contrast_vector_conv2_1}
\end{figure}
\begin{figure}[!h]
    \centering
    \includegraphics[width=\linewidth]{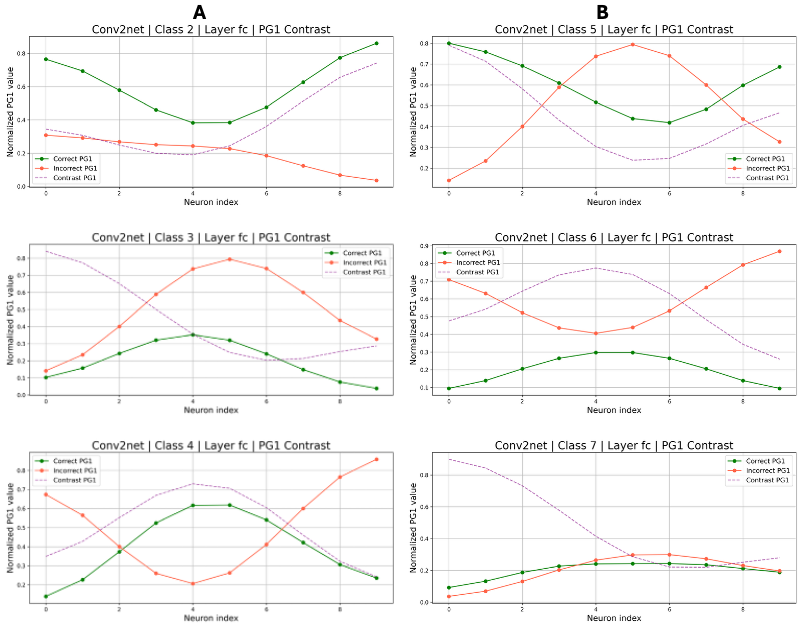}
    \caption{Contrast vectors extracted from Conv2Net for "organic" classess of bird (2), cat (3), deer (4), dog (5), frog (6) and horse (7). }
    \label{fig:contrast_vector_conv2_2}
\end{figure}

\begin{figure}[!h]
    \centering
    \includegraphics[width=\linewidth]{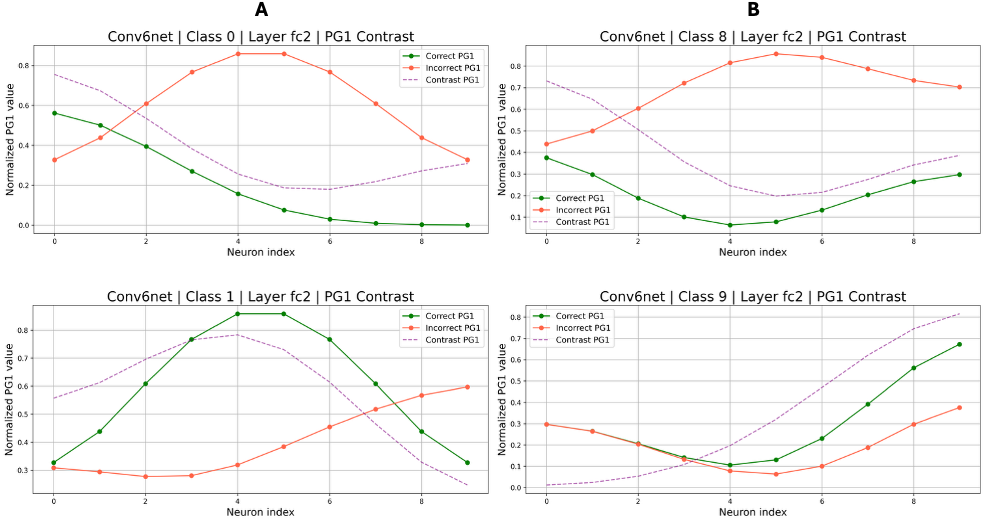}
    \caption{Contrast vectors extracted from Conv6Net for "non-organic" classes of airplane (0), automobile (1), ship (8) and truck (9).}
    \label{fig:contrast_vector_conv6_1}
\end{figure}

\begin{figure}[!h]
    \centering
    \includegraphics[width=\linewidth]{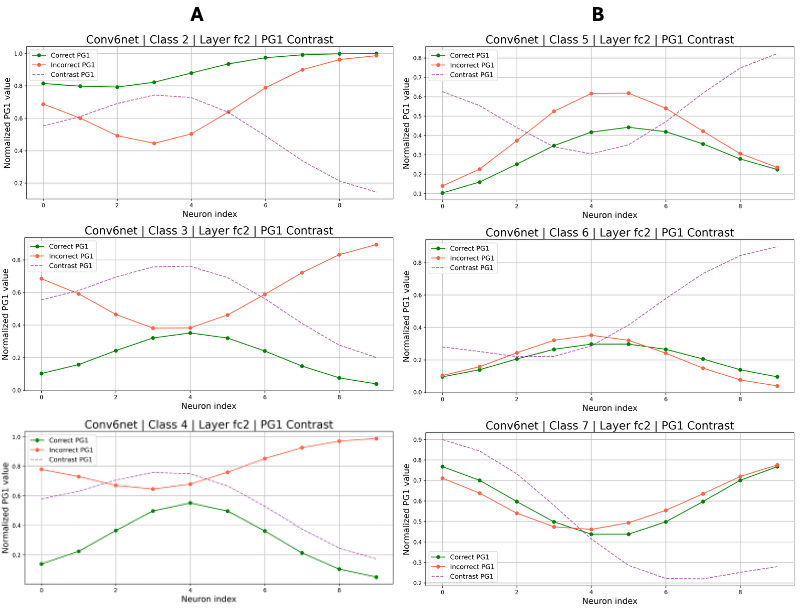}
    \caption{Contrast vectors extracted from Conv6Net for "organic" classess of bird (2), cat (3), deer (4), dog (5), frog (6) and horse (7).}
    \label{fig:contrast_vector_conv6_2}
\end{figure}


\clearpage

\subsection{Results of ASF-S on A Transformer Architecture}
We also conducted experiments on a transformer architecture.
\paragraph{Pruning and compression in Transformers/LLMs.}
Transformers are notoriously over-parameterised, and their compute/memory footprint is dominated by the feed-forward sublayers (FFNs), making them a primary target for structured compression. Prior work has explored sparsification by fine-tuning, showing that pruning can be framed as learning sparse masks (or sparse reparameterisations) while preserving downstream performance under a fixed fine-tuning budget. Complementary approaches reduce Transformer capacity via structured regularisation during training, e.g., structured layer dropping that enables depth reduction at inference time without architectural surgery. These directions broadly align with the lottery-ticket perspective: that suitably selected subnetworks can match dense-model performance, but the key challenge is \emph{how} to select structure in a way that avoids representational damage, especially under structured pruning constraints.


\paragraph{Extended BERT analysis.}
We provide additional analyses to characterise trade-offs and mechanisms. First, we report full sweep tables across sparsity levels (10--90\%) and methods, including pre- and post-prune metrics and the best head-only recovery point. Second, we plot Pareto fronts in the (Accuracy, ECE) plane to visualise deployment-relevant operating points; pruning can improve or degrade calibration depending on the method and sparsity regime. Third, we include per-layer class-conditional PG contrast curves (correct--incorrect) to illustrate that PG-guided importance is not uniform across layers, suggesting structured dependencies that are not captured by magnitude criteria.



\begin{table}[t]
\caption{\textbf{BERT head-only FFN pruning on AG News.} We report the dense
baseline, post-prune (no recovery), and best head-only recovery after pruning
at 70\% and 90\% FFN neuron removal. Metrics: accuracy (Acc) and expected
calibration error (ECE).}
\label{tab:bert_headline}
\centering
\small
\setlength{\tabcolsep}{5.5pt}
\begin{tabular}{l c cc cc}
\toprule
\textbf{Method} & \textbf{Pruned (\%)} &
\multicolumn{2}{c}{\textbf{Post-prune}} &
\multicolumn{2}{c}{\textbf{Head-only retrain}} \\
\cmidrule(lr){3-4}\cmidrule(lr){5-6}
& & Acc $\uparrow$ & ECE $\downarrow$ & Acc $\uparrow$ & ECE $\downarrow$ \\
\midrule
Baseline (dense) & 0 & \multicolumn{2}{c}{---} & 0.9454 & 0.0118 \\
\midrule
\multirow{4}{*}
 & & & & & \\[-10pt]
ASF-S (PG) & 70 & \textbf{0.9433} & 0.0106 & \textbf{0.9459} & 0.0196 \\
Random     & 70 & 0.9417          & 0.0057 & 0.9439          & 0.0178 \\
L1         & 70 & 0.9428          & 0.0042 & 0.9443          & 0.0177 \\
L2         & 70 & 0.9421          & 0.0038 & 0.9438          & 0.0185 \\
\midrule
ASF-S (PG) & 90 & \textbf{0.9400} & 0.0061 & \textbf{0.9425} & 0.0208 \\
Random     & 90 & 0.9378          & 0.0176 & 0.9400          & 0.0168 \\
L1         & 90 & 0.9420          & 0.0090 & 0.9428          & 0.0178 \\
L2         & 90 & 0.9416          & 0.0081 & 0.9432          & 0.0168 \\
\bottomrule
\end{tabular}
\end{table}

\begin{figure}
    \centering
    \includegraphics[width=0.8\linewidth]{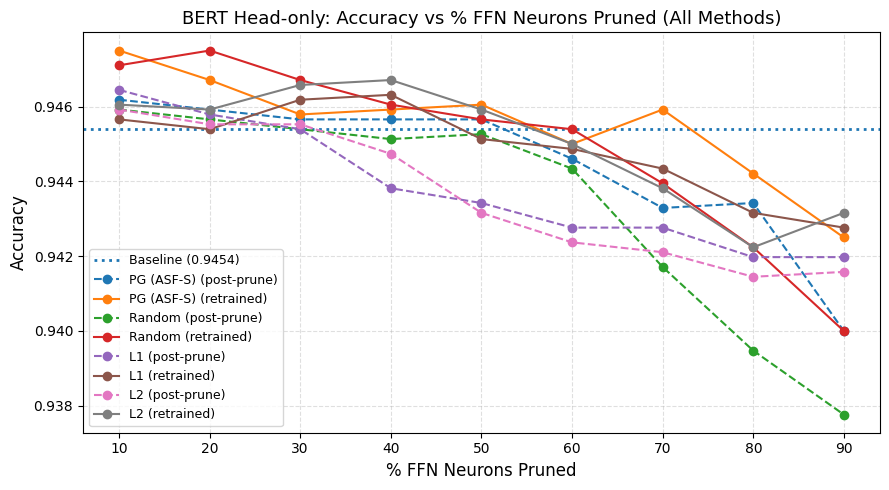}
    \caption{\textbf{Head-only BERT FFN pruning on AG News.}
    Classification accuracy as a function of the fraction of feed-forward (FFN) neurons removed from selected Transformer layers.
    We compare ASF-S (PG-guided) pruning against random, L1-, and L2-based structured pruning.
    Dashed curves denote post-pruning performance without retraining, while solid curves show recovery after fine-tuning only the task head.
    The dotted horizontal line indicates the dense baseline accuracy.}
    \label{fig:placeholder}
\end{figure}

\begin{table*}[!h]
\caption{Head-only structured FFN pruning on BERT-base (AG News). We prune FFN neurons in layers 0/5/11 and fine-tune only the task head. We report best retrained accuracy/ECE at each pruning level, and post-prune (no FT) accuracy for reference. Baseline accuracy/ECE: 0.9454 / 0.0118.}
\centering
\setlength{\tabcolsep}{4pt}
\begin{tabular}{l c cc cc cc cc}
\toprule
& & \multicolumn{2}{c}{\textbf{PG (ASF-S)}} & \multicolumn{2}{c}{\textbf{Random}} & \multicolumn{2}{c}{\textbf{L1}} & \multicolumn{2}{c}{\textbf{L2}} \\
\cmidrule(lr){3-4}\cmidrule(lr){5-6}\cmidrule(lr){7-8}\cmidrule(lr){9-10}
\textbf{Setting} & \textbf{Pruned (\%)} 
& \textbf{Acc} $\uparrow$ & \textbf{ECE} $\downarrow$
& \textbf{Acc} $\uparrow$ & \textbf{ECE} $\downarrow$
& \textbf{Acc} $\uparrow$ & \textbf{ECE} $\downarrow$
& \textbf{Acc} $\uparrow$ & \textbf{ECE} $\downarrow$ \\
\midrule
\multicolumn{10}{l}{\textbf{Baseline (no pruning)}} \\
\midrule
Head-only BERT & 0 
& \multicolumn{2}{c}{0.9454 / 0.0118} 
& \multicolumn{2}{c}{0.9454 / 0.0118}
& \multicolumn{2}{c}{0.9454 / 0.0118}
& \multicolumn{2}{c}{0.9454 / 0.0118} \\
\midrule
\multicolumn{10}{l}{\textbf{After pruning + head-only fine-tuning (best retrain)}} \\
\midrule
& 10.0  & \textbf{0.9475} & 0.0201 & 0.9471 & 0.0202 & 0.9457 & 0.0211 & 0.9461 & 0.0209 \\
& 20.0  & \textbf{0.9467} & 0.0233 & \textbf{0.9475} & 0.0214 & 0.9454 & 0.0212 & 0.9459 & 0.0216 \\
& 30.0  & 0.9458 & 0.0207 & \textbf{0.9467} & 0.0194 & 0.9462 & 0.0198 & 0.9466 & 0.0196 \\
& 40.0  & 0.9459 & 0.0213 & 0.9461 & 0.0198 & 0.9463 & \textbf{0.0174} & \textbf{0.9467} & 0.0186 \\
& 50.0  & \textbf{0.9461} & 0.0243 & 0.9457 & \textbf{0.0180} & 0.9451 & 0.0195 & \textbf{0.9459} & 0.0188 \\
& 60.0  & \textbf{0.9450} & 0.0211 & 0.9454 & 0.0183 & 0.9449 & \textbf{0.0175} & \textbf{0.9450} & 0.0189 \\
& 70.0  & \textbf{0.9459} & 0.0196 & 0.9439 & 0.0178 & 0.9443 & \textbf{0.0177} & 0.9438 & 0.0185 \\
& 80.0  & \textbf{0.9442} & 0.0200 & 0.9422 & \textbf{0.0156} & 0.9432 & 0.0172 & 0.9422 & 0.0198 \\
& 90.0  & 0.9425 & 0.0208 & 0.9400 & \textbf{0.0168} & 0.9428 & 0.0178 & \textbf{0.9432} & 0.0168 \\
\midrule
\multicolumn{10}{l}{\textbf{Post-prune (no fine-tuning) accuracy (for reference)}} \\
\midrule
& 10.0  & 0.9462 & 0.0108 & 0.9459 & 0.0119 & 0.9464 & 0.0094 & 0.9459 & 0.0100 \\
& 20.0  & 0.9459 & 0.0098 & 0.9457 & 0.0092 & 0.9458 & 0.0090 & 0.9455 & 0.0092 \\
& 30.0  & 0.9457 & 0.0104 & 0.9454 & 0.0090 & 0.9454 & 0.0107 & 0.9455 & 0.0095 \\
& 40.0  & 0.9457 & 0.0098 & 0.9451 & 0.0093 & 0.9438 & 0.0105 & 0.9447 & 0.0108 \\
& 50.0  & 0.9457 & 0.0094 & 0.9453 & 0.0066 & 0.9434 & 0.0082 & 0.9432 & 0.0086 \\
& 60.0  & 0.9446 & 0.0089 & 0.9443 & 0.0058 & 0.9428 & 0.0058 & 0.9424 & 0.0074 \\
& 70.0  & 0.9433 & 0.0106 & 0.9417 & 0.0057 & 0.9428 & 0.0042 & 0.9421 & 0.0038 \\
& 80.0  & 0.9434 & 0.0089 & 0.9395 & 0.0116 & 0.9420 & 0.0043 & 0.9414 & 0.0038 \\
& 90.0  & 0.9400 & 0.0061 & 0.9378 & 0.0176 & 0.9420 & 0.0090 & 0.9416 & 0.0081 \\
\bottomrule
\end{tabular}

\label{tab:bert_headonly_pruning_agnews}
\end{table*}

\end{document}